\let\zz\[\let\zzz\]
\newtheorem{theorem}{Theorem}
\crefname{theorem}{theorem}{Theorems}
\Crefname{Theorem}{Theorem}{Theorems}
\newtheorem{proposition}{Proposition}
\crefname{proposition}{proposition}{propositions}
\Crefname{Proposition}{Proposition}{Propositions}
\crefname{lemma}{lemma}{lemmas}
\Crefname{Lemma}{Lemma}{Lemmas}
\newtheorem{corollary}{Corollary}
\crefname{corollary}{corollary}{corollaries}
\Crefname{Corollary}{Corollary}{Corollaries}
\newtheorem{definition}{Definition}
\crefname{definition}{definition}{definitions}
\Crefname{Definition}{Definition}{Definitions}
\crefname{remark}{remark}{remarks}
\Crefname{Remark}{Remark}{Remarks}
\crefname{figure}{figure}{figures}
\Crefname{Figure}{Figure}{Figures}
\def\Pens{\mathcal{P}}
\newcommand{\Rd}{\mathbb{R}^d}
\newcommand{\ie}{\textit{i.e.}}
\newcommand{\calO}{\mathcal{O}}
\newcommand{\calP}{\mathcal{P}}
\newcommand{\Var}{\mathrm{Var}}
\newcommand{\Cov}{\mathrm{Cov}}
\newcommand{\Tr}{\mathrm{Tr}}
\newcommand{\bbN}{\mathbb{N}}
\newcommand{\bbR}{\mathbb{R}}
\newcommand{\bbE}{\mathbb{E}}
\newcommand{\bfI}{\mathbf{I}}
\newcommand{\bfm}{\mathbf{m}}
\def\msa{\mathsf{A}}
\def\mcp{\mathcal{P}}
\def\rset{\mathbb{R}}
\def\nset{\mathbb{N}}
\def\nsets{\mathbb{N}^*}
\def\rmd{\mathrm{d}}
\newcommand{\abs}[1]{\left\vert #1 \right\vert}
\newcommandx{\Vnorm}[2][1=V]{\| #2 \|_{#1}}
\newcommandx{\normpi}[2][2=2]{\left\Vert  #1 \right\Vert_{#2}}
\newcommandx{\normH}[2][2=2]{\left\Vert  #1 \right\Vert}
\newcommandx{\normHLigne}[2][2=2]{\Vert  #1 \Vert}
\newcommandx{\normHLine}[2][2=2]{\Vert  #1 \Vert}
\newcommandx{\normmu}[2][2=2]{\left\Vert  #1 \right\Vert_{#2}}
\newcommandx{\normopmu}[2][2=2]{\left\vvvert  #1 \right\vvvert_{#2}}
\newcommandx{\normopH}[2][2=2]{\left\vvvert  #1 \right\vvvert}
\newcommandx{\normop}[2][2=2]{\left\vvvert  #1 \right\vvvert}
\newcommand{\ps}[2]{\left\langle#1,#2 \right\rangle}
\newcommandx{\normpiLine}[2][2=2]{\Vert  #1 \Vert_{#2}}
\newcommandx{\normmuLine}[2][2=2]{\Vert  #1 \Vert_{#2}}
\newcommandx{\normopmuLine}[2][2=2]{\vvvert  #1 \vvvert_{#2}}
\newcommandx{\normopHLine}[2][2=2]{\vvvert  #1 \vvvert}
\newcommandx{\normopLine}[2][2=2]{\vvvert  #1 \vvvert}
\newcommandx{\VnormEq}[2][1=V]{\left\| #2 \right\|_{#1}}
\newcommandx{\norm}[2][1=]{\ifthenelse{\equal{#1}{}}{\left\Vert #2 \right\Vert}{\left\Vert #2 \right\Vert^{#1}}}
\newcommandx{\normLigne}[2][1=]{\ifthenelse{\equal{#1}{}}{\Vert #2 \Vert}{\Vert #2\Vert^{#1}}}
\newcommandx{\norminf}[2][1=]{\ifthenelse{\equal{#1}{}}{\left\Vert #2 \right\Vert_{\infty}}{\left\Vert #2 \right\Vert^{#1}_{\infty}}}
\newcommand{\PE}{\mathbb{E}}
\newcommand{\expe}[1]{\PE \left[ #1 \right]}
\newcommandx{\expeE}[2][2=]{\mathbb{E}^{#2}\left[ #1 \right]}
\newcommand{\plusinfty}{+\infty}
\def\ie{\textit{i.e.}}
\def\eqsp{\;}
\newcommand{\ocint}[1]{\left(#1\right]}
\newcommand{\ccint}[1]{\left[#1\right]}
\newcommandx\sequence[3][2=,3=]
\newcommandx\sequenceD[3][2=,3=]
\newcommandx{\sequencen}[2][2=n\in\nset]{\ensuremath{( #1)_{#2}}}
\newcommandx{\sequencek}[2][2=k\in\nset]{\ensuremath{( #1)_{#2}}}
\newcommandx{\sequencens}[2][2=n\in\nsets]{\ensuremath{( #1)_{#2}}}
\newcommandx{\sequenceks}[2][2=k\in\nsets]{\ensuremath{( #1)_{#2}}}
\newcommandx\sequenceDouble[4][3=,4=]
\newcommandx{\sequencenDouble}[3][3=n\in\nset]{\ensuremath{\{ (#1_{n},#2_{n}), \eqsp #3 \}}}
\def\iid{i.i.d.}
\newcommand{\ensemble}[2]{\left\{#1\,:\eqsp #2\right\}}
\newcommand{\set}[2]{\ensemble{#1}{#2}}
\def\veps{\varepsilon}
\def\sphere{\mathbb{S}}
\def\sphereD{\mathbb{S}^{d-1}}
\def\distance{\ell}
\newcommandx{\wasserstein}[3][1=\distance,3=]{\mathbf{W}_{#1}^{#3}\left(#2\right)}
\newcommandx{\wassersteinLigne}[3][1=\distance,3=]{\mathbf{W}_{#1}^{#3}(#2)}
\newcommandx{\wassersteinD}[1][1=\distance]{\mathbf{W}_{#1}}
\newcommandx{\wassersteinDLigne}[1][1=\distance]{\mathbf{W}_{#1}}
\newcommandx{\swasserstein}[3][1=\distance,3=]{\mathbf{SW}_{#1}^{#3}\left(#2\right)}
\newcommandx{\swassersteinLigne}[3][1=\distance,3=]{\mathbf{SW}_{#1}^{#3}(#2)}
\newcommandx{\swassersteinD}[1][1=\distance]{\mathbf{SW}_{#1}}
\newcommandx{\swassersteinDLigne}[1][1=\distance]{\mathbf{SW}_{#1}}
\newcommandx{\hatswassersteinD}[1][1=\distance]{\widehat{\mathbf{SW}}_{#1}}
\newcommandx{\tswassersteinD}[1][1=\distance]{\widetilde{\mathbf{SW}}_{#1}}
\newcommandx{\sdelta}[1][1=\distance]{\mathbf{S\Delta}_{#1}}
\newcommandx{\hatsdelta}[1][1=\distance]{\widehat{\mathbf{S\Delta}}_{#1}}
\newcommandx{\maxSdelta}[1][1=\distance]{\max \text{--}~\mathbf{S\Delta}_{#1}}
\newcommandx{\sIpm}[1][1=\distance]{\mathbf{S}\boldsymbol{\gamma}_{#1}}
\newcommandx{\sinkD}[1][1=\distance]{\overline{\mathbf{W}}_{#1}}
\newcommandx{\ssinkD}[1][1=\distance]{\overline{\mathbf{SW}}_{#1}}
\newcommandx{\gswassersteinD}[1][1=\distance]{\mathbf{GSW}_{#1}}
\newcommandx{\maxgswassersteinD}[1][1=\distance]{\max\text{-- }\mathbf{GSW}_{#1}}
\def\unifS{\boldsymbol{\sigma}}
\def\hmu{\hat{\mu}}
\def\hnu{\hat{\nu}}
\def\ths{\theta}
\def\thss{\theta^{\star}}
\def\thsss{\theta^{\star}_{\sharp}}
\def\thsssl{(\theta_l)_\sharp^{\star}}
\newcommandx{\ball}[3][3=]{\ifthenelse{\equal{#3}{}}{\operatorname{B}(#1,#2)}{\operatorname{B}_{#3}(#1,#2)}}
\def\lawGauss{\mathrm{N}}
\def\gammabf{\boldsymbol{\gamma}}
\def\moment{\mathtt{m}}
\def\constmud{\Xi}
\def\bX{\bar{X}}
\def\bmu{\bar{\mu}}
\def\bnu{\bar{\nu}}
\def\bxi{\bar{\xi}}
\def\bth{\bar{\theta}}
\title{Fast Approximation of the Sliced-Wasserstein Distance Using Concentration of Random Projections}
\author{%
  Kimia Nadjahi$^1$\thanks{Corresponding author: \texttt{kimia.nadjahi@telecom-paris.fr}}\hspace{1.4mm}, \quad Alain Durmus$^2$, \quad Pierre E. Jacob$^3$, \\
  \textbf{Roland Badeau$^1$, \quad Umut \c{S}im\c{s}ekli$^{4}$} \\[2mm]
  1: LTCI, Télécom Paris, Institut Polytechnique de Paris, France \\[.5mm]
  2: Universit\'e Paris-Saclay, ENS Paris-Saclay, CNRS, \\ Centre Borelli, F-91190 Gif-sur-Yvette, France \\[.5mm]
  3: Department of Information Systems, Decision Sciences and Statistics, \\
  ESSEC Business School, Cergy, France \\[.2mm]
  4: INRIA - D\'{e}partement d'Informatique de l'\'{E}cole Normale Sup\'{e}rieure, \\
  PSL Research University, Paris, France
}
\begin{document}

\maketitle

\begin{abstract}
  The Sliced-Wasserstein distance (SW) is being increasingly used in machine
  learning applications as an alternative to the Wasserstein distance and
  offers significant computational and statistical benefits. Since it is
  defined as an expectation over random projections, SW is commonly
  approximated by Monte Carlo. We adopt a new perspective to approximate SW
  by making use of the concentration of measure phenomenon: under mild
  assumptions, one-dimensional projections of a high-dimensional random
  vector are approximately Gaussian.  Based on this observation, we develop a
  simple deterministic approximation for SW. Our method does not require
  sampling a number of random projections, and is therefore both accurate
  and easy to use
  compared to the usual Monte Carlo approximation. We derive nonasymptotical
  guarantees for our approach, and show that the approximation error goes to
  zero as the dimension increases, under a weak dependence condition on the
  data distribution. We validate our theoretical findings on synthetic
  datasets, and illustrate the proposed approximation on a generative modeling
  problem.
\end{abstract}

\section{Introduction}

Recent years have witnessed the emergence of numerical methods
inspired by optimal transport (OT) to solve machine learning
problems. In particular, Wasserstein distances are a core ingredient of OT and define  metrics between probability
measures. Despite their nice theoretical properties, they are in general 
computationally expensive in large-scale settings. Several workarounds
that scale better to large problems have been developed, and include
the Sliced-Wasserstein distance (SW,
\cite{rabin2012wasserstein,bonneel2015sliced}). 

The SW metric is a computationally cheaper alternative to Wasserstein as it exploits the analytical form
of the Wasserstein distance between univariate
distributions. More precisely, consider two random variables $X$ and $Y$
in $\Rd$ with respective distributions $\mu$ and $\nu$, and denote by
$\thsss \mu, \thsss \nu$ the univariate distributions of the
projections of $X, Y$ along $\theta \in \Rd$. SW then compares $\mu$
and $\nu$ by computing $\bbE[\wassersteinD[](\thsss \mu, \thsss \nu)]$, where the expectation $\bbE$ is taken with respect to $\theta$ uniformly distributed on the unit sphere, and $\wassersteinD[]$ is the Wasserstein distance.

In practice, the expectation is typically estimated by Monte Carlo: one uniformly draws $L$ projection directions $\{ \theta_l \}_{l=1}^L$, and approximates SW with $L^{-1}\sum_{i=1}^L \wassersteinD[]\big(\thsssl \mu, \thsssl \nu\big)$. Since the Wasserstein distance between univariate distributions can easily be computed in closed-form, this scheme leads to significant computational benefits as compared to the Wasserstein distance, provided that $L$ is not chosen too large. SW has then been successfully applied in several practical tasks, such as classification \cite{kolouri2016sliced,carriere2017sliced}, Bayesian inference \cite{nadjahi2020swabc}, the computation of barycenters of measures \cite{rabin2012wasserstein,cohen2021sliced}, and implicit generative modeling \cite{kolouri2018sliced,kolouri2018swae,deshpande2018generative,wu2019sliced,liutkus2019sliced,dai2021sliced}. Besides, SW has been shown to offer nice theoretical properties as well. Indeed, it satisfies the metric axioms \cite{bonnotte2013}, the estimators obtained by minimizing SW are asymptotically consistent \cite{nadjahi2019asymptotic}, the convergence in SW is equivalent to the convergence in Wasserstein \cite{nadjahi2019asymptotic,bayraktar2019strong}, and even though the sample complexity of Wasserstein grows exponentially with the data dimension \cite{dudley1969,fournier2015,weed2019sharp}, the sample complexity of SW does not depend on the dimension \cite{nadjahi2020}. However, the latter study also demonstrated with a theoretical error bound, that the quality of the Monte Carlo estimate of SW depends on the number of projections and the variance of the one-dimensional Wasserstein distances \cite[Theorem 6]{nadjahi2020}. In other words, to ensure that the induced approximation error is reasonably small, one might need to choose a large value for $L$, which inevitably increases the computational complexity of SW. Alternative approaches have been proposed to overcome this issue, and mainly consist in picking more ``informative'' projection directions: e.g., SW based on orthogonal projections \cite{wu2019sliced,meng2019}, maximum SW \cite{deshpande2019max}, generalized SW distances \cite{kolouri2019generalized} and distributional SW distances \cite{nguyen2021distributional}.

In this paper, we adopt a different perspective and leverage concentration results on random projections to approximate SW: previous work showed that, under relatively mild conditions, the typical distribution of low-dimensional projections of high-dimensional random variables is close to some Gaussian law \cite{sudakov1978,diaconis1984}. Recently, this phenomenon has been illustrated with a bound in terms of the Wasserstein distance \cite{reeves2017}: let $\{X_i\}_{i=1}^d$ be a sequence of real random variables with distribution $\mu_d$, such that $X_1, \dots, X_d$ are independent with finite fourth-order moments; then, $\bbE[\wassersteinD[](\thsss \mu, \lawGauss_\mu)^2]$ goes to zero as $d$ increases, where $\lawGauss_\mu$ is a univariate Gaussian distribution whose variance depends on $\mu_d$ and the expectation is taken with respect to a Gaussian variable $\theta$. This result has very recently been used to bound the ``maximum-sliced distance'' between any probability measure and its Gaussian approximation \cite{goldt2021}. In our work, we use it to design a novel technique that estimates SW with a simple \emph{deterministic} formula. As opposed to Monte Carlo, our method does not depend on a finite set of random projections, therefore it eliminates the need of tuning the hyperparameter $L$ and can lead to a significant computational time reduction. Besides, our proposal is quite different from the aforementioned variants of SW which consist in selecting ``informative'' projection directions: these alternatives are defined as optimization problems whose resolution is challenging (e.g., \cite[Section 3.2]{nguyen2021distributional}) and are then computed by finding an approximate solution. This incurs an additional computational cost and estimation error, while our method directly approximates SW (thus, does not define an alternative distance) via simple deterministic operations, does not rely on any hyperparameters, and comes with theoretical guarantees on its induced error.

The important steps to formulate our approximate SW are summarized as follows. We first define an alternative SW whose projection directions are drawn
from the same Gaussian distribution as in \cite{reeves2017}, instead of uniformly on the unit sphere, and establish its relation with the original SW. %
By combining this property with \cite[Theorem 1]{reeves2017}, we bound the absolute difference between SW applied to any two probability measures ${\mu_d}$, ${\nu_d}$ on $\Rd$, and the Wasserstein distance between the univariate Gaussians $\lawGauss_{\mu_d}$, $\lawGauss_{\nu_d}$. Then, we explain why the mean parameters of ${\mu_d}$ and ${\nu_d}$ should necessarily be zero for the approximation error to decrease as $d$ grows. Nevertheless, we show that it is not a
limiting factor, by exploiting the following decomposition of SW: SW
between $\mu_d, \nu_d$ can be equivalently written as the sum of the \emph{difference between their means} and the SW between
the \emph{centered versions} of $\mu_d, \nu_d$.

Our approach then consists in estimating SW between the centered versions with the Wasserstein term between Gaussian approximations to meet the zero-means condition, and recover SW between the original measures via the aforementioned property. Since the Wasserstein distance between Gaussian distributions admits a closed-form solution, our approximate SW is very easy to compute, and faster than the Monte Carlo estimate obtained with a large number of projections. We derive nonasymptotical guarantees on the error induced by our approach. Specifically, we define a weak dependence condition under which the error is shown to go to zero with increasing $d$. Our theoretical results are then validated with experiments conducted on synthetic data. Finally, we leverage our theoretical insights to design a novel adversarial framework for a typical generative modeling problem in machine learning, and illustrate its advantages in terms of accuracy and computational time, over generative models based on the Monte Carlo estimate of SW. Our empirical results can be reproduced with our open source code\footnote{See \url{https://github.com/kimiandj/fast_sw}}. %

\section{Background}

We first give some background on optimal transport distances and
concentration of measure for random projections. All random variables
are defined on a probability space $(\Omega, \mathcal{F}, \mathbb{P})$
with associated expectation operator $\mathbb{E}$. We denote $\nsets = \nset \setminus \{0\}$, and for $d \in \nsets$, $\calP(\Rd)$ is the set of probability measures on $\Rd$. \vspace{-2mm} %

\subsection{Optimal transport distances} \label{sec:ot}

Let $p \in [1, +\infty)$ and $\calP_p(\Rd) = \set{ \mu \in \mcp(\Rd)}{\int_{\Rd} \norm{x}^p \rmd\mu(x) < \plusinfty }$ be the set of probability measures on $\Rd$ with finite moment of order $p$. The Wasserstein distance of order $p$ between any $\mu, \nu \in \calP_p(\Rd)$ is defined as %
\begin{equation}
\label{eq:def_wass}
  \wassersteinD[p]^p(\mu, \nu) = \inf_{\pi \in \Pi(\mu, \nu)} \int_{\Rd \times \Rd} \norm{ x - y }^p \rmd\pi(x,y) \eqsp,
 \end{equation}
where $\| \cdot \|$ denotes the Euclidean norm, and $\Pi(\mu, \nu)$ the set of probability measures on $\Rd \times \Rd$ whose marginals with respect to the first and second variables are given by $\mu$ and $\nu$ respectively. In some particular settings, $\wassersteinD[p]$ is relatively easy to compute since the optimization problem in \eqref{eq:def_wass} admits a closed-form solution: we give two examples that will be useful in the rest of the paper.

\paragraph{Gaussian distributions.} Denote by $\lawGauss(\bfm, \Sigma)$ the Gaussian distribution on $\Rd$ with mean $\bfm \in \Rd$ and covariance matrix $\Sigma \in \rset^{d \times d}$ symmetric positive-definite. The Wasserstein distance between two Gaussian distributions, also known as the Wasserstein-Bures metric, is given by \cite{dowson1982}
\begin{equation} \label{eq:wass_gauss}
  \wassersteinD[2]^2\{\lawGauss(\bfm_1, \Sigma_1), \lawGauss(\bfm_2, \Sigma_2)\} = \| \bfm_1 - \bfm_2 \|^2 + \Tr\big[\Sigma_1 + \Sigma_2 - 2\big(\Sigma_1^{1/2}\Sigma_2 \Sigma_1^{1/2}\big)^{1/2} \big] \eqsp,
\end{equation}
where $\Tr$ is the trace operator. %

\paragraph{Univariate distributions.} Consider $\mu, \nu \in \calP_p(\rset)$, and denote by $F_\mu^{-1}$ and $F_\nu^{-1}$ the quantile functions of $\mu$ and $\nu$ respectively. By \cite[Theorem 3.1.2.(a)]{rachev1998mass},
\begin{equation} \label{eq:wass_1d}
    \wassersteinD[p]^p(\mu, \nu) = \int_{0}^1 \abs{ F_\mu^{-1}(t) - F_\nu^{-1}(t) }^p \rmd t \eqsp. %
\end{equation}
If $\mu = n^{-1}\sum_{i=1}^n \updelta_{x_i}$ and $\nu =n^{-1}\sum_{i=1}^n \updelta_{y_i}$, with $\{x_i\}_{i=1}^n, \{y_i\}_{i=1}^n \subset \rset^n$ and $\updelta_z$ the Dirac distribution with mass on $z$, \eqref{eq:wass_1d} can simply be calculated by sorting $\{x_i\}_{i=1}^n$ and $ \{y_i\}_{i=1}^n$ as $x_{(1)} \leq \ldots \leq x_{(n)}$ and  $y_{(1)} \leq \ldots\leq y_{(n)}$. Indeed, in this case, $\wassersteinD[p]^p(\mu, \nu)= n^{-1} \sum_{i=1}^n | x_{(i)} - y_{(i)} |^p$. However, when the empirical distributions are multivariate, $\wassersteinD[p](\mu, \nu)$ is not analytically available in general, so its computation is expensive: the standard methods used to solve the linear program in \eqref{eq:def_wass} have a worst-case computational complexity in $\calO(n^3 \log n)$, and tend to have a super-cubic cost in practice \cite[Chapter 3]{peyre2019}. 

The Sliced-Wasserstein distance \cite{rabin2012wasserstein,bonneel2015sliced} defines a practical alternative metric by leveraging the computational efficiency of $\wassersteinD[p]$ for univariate distributions. Let $\sphereD$ be the $d$-dimensional unit sphere and $\unifS$ the uniform distribution on $\sphereD$. For $\theta \in \sphereD$, $\thss : \rset^d \to \rset$ denotes the linear form $x \mapsto \ps{\theta}{x}$ with $\ps{\cdot}{\cdot}$ the Euclidean inner-product. Then, SW of order $p \in [1, \infty)$ between $\mu, \nu \in \calP_p(\Rd)$ is
\begin{equation} \label{eq:def_sw}
  \swassersteinD[p]^{p}(\mu, \nu) = \int_{\sphereD} \wassersteinD[p]^p(\thss_{\sharp} \mu, \thss_{\sharp} \nu) \rmd\unifS(\ths)  \eqsp,
\end{equation}
where for any measurable function $f :\Rd \to \rset$ and $\xi \in \mcp(\Rd)$, $f_{\sharp}\xi$ is the push-forward measure of $\xi$ by $f$: for any Borel set $\msa$ in $\rset$, $f_{\sharp}\zeta(\msa) = \zeta(f^{-1}(\msa))$, with $f^{-1}(\msa) = \{x \in \Rd \, : \, f(x) \in \msa\}$. 

Since $\thss_{\sharp} \mu$, $\thss_{\sharp} \nu$ are univariate distributions, the Wasserstein distances in \eqref{eq:def_sw} are conveniently computed using \eqref{eq:wass_1d}. Besides, in practical applications, the expected value in \eqref{eq:def_sw} is typically approximated with a standard Monte Carlo method: \vspace{-2mm}
\begin{equation} \label{eq:mc_sw}
  \swassersteinD[p, L]^{p}(\mu, \nu) = (1/L) \sum_{l=1}^L \wassersteinD[p]^p\big(\thsssl \mu, \thsssl \nu\big), \text{ with \ } \{ \theta_l \}_{l=1}^L \text{ \ \iid~from } \unifS \eqsp.
\end{equation}
\vspace{-4mm}

Computing $\swassersteinD[p, L]$ between two empirical distributions then amounts to projecting sets of $n$ observations in $\Rd$ along $L$ directions, and sorting the projected data. The resulting computational complexity is $\calO(Ldn + Ln\log n)$, which is more efficient than $\wassersteinD[p]$ in general. 
This complexity means that the Monte Carlo estimate is
more expensive when $d$, $n$ and $L$ increase,
and it is often unclear how $L$ should be chosen in 
order to control the approximation error;
see \cite[Theorem 6]{nadjahi2020}. 

\subsection{Central limit theorems for random projections} \label{subsec:clt}

There is a rich literature on the typical behavior of one-dimensional
random projections of high-dimensional vectors. To be more specific,
let $(\theta_i)_{i\in \nsets}$ be \iid~standard one-dimensional
Gaussian random variables and $(X_i)_{i\in \nsets}$ be a sequence of
one-dimensional random variables. Denote for any $d \in\nsets$,
$\theta_{1:d} = \{\theta_i\}_{i=1}^d$ and $X_{1:d} =
\{X_i\}_{i=1}^d$. Several central limits theorems ensure that, under
relatively mild conditions, the sequence of distributions of
$d^{-1/2}\ps{\theta_{1:d}}{X_{1:d}} \in \rset$ given $\theta_{1:d} \in \Rd$ converges in distribution to a Gaussian random variable in probability. This line of
work goes back to \cite{sudakov1978,diaconis1984}, whose contributions have then been sharpened and generalized in
\cite{hall1993,vonweizsacker1997,anttila2003,bobkov2003,klartag2007,meckes2010,dumbgen2013,leeb2013}. In
particular, a recent study \cite{reeves2017} gives a quantitative version of this phenomenon.
More precisely, denote for any $d \in\nsets$ by $\mu_d^X$, the distribution of $X_{1:d}$ (\ie, the joint distribution of $X_1, X_2, \dots, X_d$) and
$\gammabf_d$ the zero-mean Gaussian distribution with covariance matrix $(1/d) \bfI_d$. 
Assume that for any $d \in \nsets$, $\mu_d^X \in \Pens_2(\rset^d)$. Then, \cite[Theorem 1]{reeves2017} shows that there exists a universal constant $C \geq 0$ such that
  \begin{equation} \label{eq:reeves}
    \int_{\rset^d}  \wassersteinD[2]^2\left(\thsss \mu_d^X, \lawGauss\big(0, d^{-1}\moment_2(\mu_d^X)\big) \right) \rmd \gammabf_d(\theta) \leq C\constmud_d(\mu_d^X)\eqsp, \; \text{with }
  \end{equation}
  \begin{equation}
    \label{eq:def_constmud}
\constmud_d(\mu_d^X) =     d^{-1} \{\alpha(\mu_d^X) + \big( \moment_2(\mu_d^X) \beta_1(\mu_d^X) \big)^{1/2} + \moment_2(\mu_d^X)^{1/5} \beta_2(\mu_d^X)^{4/5}\} \eqsp,
  \end{equation}
  \vspace{-1cm}

    \begin{align}
    \label{eq:def_constmud_d}
  \moment_2(\mu_d^X) &= \expe{ \norm{X_{1:d}}^2 } \eqsp, \,\,\, \alpha(\mu_d^X) = \expe{\abs{ \norm{X_{1:d}}^2 - \moment_2(\mu_d^X) }}\eqsp, \,\,  \beta_q(\mu_d^X) = \PE^{\frac1q}[\abs{\ps{X_{1:d}}{X'_{1:d}}}^q] \eqsp,
  \end{align}
  where $q \in \{1,2\}$ and $(X'_i)_{i \in \nsets}$ is an independent copy of $(X_i)_{i\in \nsets}$. A formal statement of this result is also given for completeness in the supplement. %

\section{Approximate Sliced-Wasserstein distance based on concentration of random projections} \label{sec:theory}

We develop a novel method to approximate the 
Sliced-Wasserstein distance of order 2, by extending the bound in
\eqref{eq:reeves} and deriving novel properties for SW. We then derive
nonasymptotical guarantees of the corresponding approximation error, which
ensure that our estimate is accurate for high-dimensional data under a weak
dependence condition. \vspace{-2mm}

\subsection{Sliced-Wasserstein distance with Gaussian projections}

First, to enable the use of \eqref{eq:reeves} for the analysis of SW, we introduce a variant of $\swassersteinD[p]$ \eqref{eq:def_sw} whose projections are drawn from the Gaussian distribution considered in \eqref{eq:reeves}, instead of uniformly on the sphere. 
 The Sliced-Wasserstein distance of order $p \in [1, +\infty)$ based on Gaussian projections is defined for any $\mu, \nu \in \calP_p(\Rd)$ as
    \begin{equation} \label{eq:gaussian_sw}
        \tswassersteinD[p]^p(\mu, \nu) = \int_{\Rd} \wassersteinD[p]^p(\thsss \mu, \thsss \nu) \rmd \gammabf_d ( \theta ) \eqsp.
    \end{equation}

In the next proposition, we establish a simple mathematical relation between traditional SW and the newly introduced one: we prove that $\tswassersteinD[p]$ is equal to $\swassersteinD[p]$ up to a proportionality constant that only depends on the data dimension $d$ and the order $p$.

\begin{proposition} \label{prop:gaussian_sw}
  Let $p \in [1, +\infty)$. Then, $\tswassersteinD[p]$ \eqref{eq:gaussian_sw} is related to $\swassersteinD[p]$ \eqref{eq:def_sw} as follows: for any $\mu, \nu \in \calP_p(\Rd)$,\, $\tswassersteinD[p](\mu, \nu) = \left(2/d\right)^{1/2} \big\{ \Gamma(d/2 + p/2)~/~\Gamma(d/2) \big\}^{1/p}\swassersteinD[p](\mu, \nu)$, where $\Gamma$ is the Gamma function.
\end{proposition}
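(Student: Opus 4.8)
The plan is to exploit two structural facts. First, the one-dimensional Wasserstein distance $\wassersteinD[p]$ is positively homogeneous of degree one under simultaneous scaling of both of its arguments. Second, a Gaussian vector with law $\gammabf_d = \lawGauss(0,(1/d)\bfI_d)$ is isotropic, so it decomposes into independent radial and angular parts, the angular part being distributed according to $\unifS$ on $\sphereD$. Combining these reduces $\tswassersteinD[p]$ to $\swassersteinD[p]$ multiplied by a radial moment, which is then a pure Gamma-integral computation.

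First I would rewrite the integrand in \eqref{eq:gaussian_sw}. For $\theta \in \Rd \setminus \{0\}$, write $\theta = r\omega$ with $r = \norm{\theta}$ and $\omega = \theta/\norm{\theta} \in \sphereD$. Since $\thss(x) = \ps{\theta}{x} = r\ps{\omega}{x}$, the pushforward $\thsss\mu$ is the image of $\omega^{\star}_{\sharp}\mu$ under the scaling $t \mapsto rt$ on $\rset$. Because the quantile function of a scaled univariate law is the scaled quantile function, the closed form \eqref{eq:wass_1d} yields $\wassersteinD[p]^p(\thsss\mu, \thsss\nu) = r^p\,\wassersteinD[p]^p(\omega^{\star}_{\sharp}\mu, \omega^{\star}_{\sharp}\nu)$, where the $r^p$ (rather than $\abs{r}^p$) is harmless since $r = \norm{\theta} \geq 0$.

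Next I would pass to polar coordinates for $\gammabf_d$. Writing $\theta \sim \gammabf_d$ as $\theta = R\,\Omega$ with $R = \norm{\theta}$ and $\Omega = \theta/\norm{\theta}$, isotropy gives that $R$ and $\Omega$ are independent with $\Omega \sim \unifS$. Substituting the homogeneity identity and factorizing the integral by independence gives
\[
  \tswassersteinD[p]^p(\mu,\nu) = \bbE[R^p]\int_{\sphereD}\wassersteinD[p]^p(\omega^{\star}_{\sharp}\mu, \omega^{\star}_{\sharp}\nu)\,\rmd\unifS(\omega) = \bbE[R^p]\,\swassersteinD[p]^p(\mu,\nu),
\]
the last equality being the definition \eqref{eq:def_sw}. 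The null set $\{\theta = 0\}$ is $\gammabf_d$-negligible and can be ignored throughout.

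It then remains to compute $\bbE[R^p]$. Since $\theta \sim \lawGauss(0,(1/d)\bfI_d)$, the vector $\sqrt{d}\,\theta$ is standard Gaussian, so $\sqrt{d}\,R = \norm{\sqrt{d}\,\theta}$ follows a chi distribution with $d$ degrees of freedom, whose $p$-th moment is the standard integral $2^{p/2}\,\Gamma(d/2+p/2)/\Gamma(d/2)$ (obtained directly by integrating $\norm{\theta}^p$ against the Gaussian density in polar coordinates and recognizing a Gamma integral). Hence $\bbE[R^p] = (2/d)^{p/2}\,\Gamma(d/2+p/2)/\Gamma(d/2)$; substituting and taking $p$-th roots gives the stated constant. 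The only point requiring care is justifying the radial–angular factorization together with $\Omega \sim \unifS$; this is the standard characterization of isotropic Gaussians, and everything else reduces to the moment computation.
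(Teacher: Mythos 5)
Your proof is correct and follows essentially the same route as the paper's: both use the degree-$p$ homogeneity of the univariate Wasserstein distance under scaling (via quantile functions) and then factor the isotropic Gaussian into independent radial and angular parts, reducing the constant to the $p$-th radial moment $(2/d)^{p/2}\Gamma(d/2+p/2)/\Gamma(d/2)$. The paper carries out the radial--angular factorization by an explicit spherical change of variables and a Gamma integral rather than citing the chi-distribution moment, but this is only a presentational difference.
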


Since \eqref{eq:reeves} only applies to the Wasserstein distance of order 2, we will focus on SW of that same order in the rest of the paper. In this case, SW with Gaussian projections is \emph{equal} to the original SW. Indeed, we can show that the constant $\left(2/d\right)^{1/2} \{ \Gamma(d/2 + p/2)~/~\Gamma(d/2) \}^{1/p}$ defined in \Cref{prop:gaussian_sw} is equal to 1 when $p = 2$, by using the property $\Gamma(d/2 + 1) = (d/2) \Gamma(d/2)$. \vspace{-2mm}

\subsection{Approximate Sliced-Wasserstein distance} \label{sec:approx_sw}

Our next result is an easy consequence of \eqref{eq:reeves} and \Cref{prop:gaussian_sw}, and shows that the absolute difference between $\swassersteinD[2](\mu_d, \nu_d)$ and $\wassersteinD[2]\{\lawGauss(0,d^{-1}\moment_2(\mu_d)), \lawGauss(0,d^{-1}\moment_2(\nu_d))\}$ for any $\mu_d, \nu_d \in \calP_2(\Rd)$, is bounded from above by $\constmud_d(\mu_d) + \constmud_d(\nu_d)$ \eqref{eq:def_constmud}.

\begin{theorem} \label{thm:sw_cvg}
There exists a universal constant $C > 0$ such that for any $\mu_d,\nu_d \in \Pens_2(\rset^d)$,
    \begin{equation} \label{eq:bound_sw_cvg}
        \left| \swassersteinD[2](\mu_d, \nu_d) - \wassersteinD[2]\{\lawGauss(0,d^{-1}\moment_2(\mu_d)), \lawGauss(0,d^{-1}\moment_2(\nu_d))\} \right| \leq C \big(\constmud_d(\mu_d)+\constmud_d(\nu_d)\big)^{1/2} \eqsp ,
    \end{equation}
    where, for $\xi_d \in \{\mu_d, \nu_d\}$, $\constmud_d(\xi_d)$ and $\moment_2(\xi_d)$ are defined in \eqref{eq:def_constmud} and \eqref{eq:def_constmud_d} respectively.
\end{theorem}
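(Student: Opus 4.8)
The plan is to use \Cref{prop:gaussian_sw} to realize $\swassersteinD[2](\mu_d,\nu_d)$ as an $L^2(\gammabf_d)$-norm of the map $\theta \mapsto \wassersteinD[2](\thsss\mu_d,\thsss\nu_d)$, and then transfer a fixed-direction (pointwise in $\theta$) triangle inequality for the one-dimensional Wasserstein distance into that norm, closing the bound with the concentration estimate \eqref{eq:reeves}. First I would invoke \Cref{prop:gaussian_sw} with $p=2$: there the proportionality constant $(2/d)^{1/2}\{\Gamma(d/2+1)/\Gamma(d/2)\}^{1/2}$ equals $1$ via $\Gamma(d/2+1)=(d/2)\Gamma(d/2)$, so $\swassersteinD[2]=\tswassersteinD[2]$ and hence
\[
\swassersteinD[2]^2(\mu_d,\nu_d) = \tswassersteinD[2]^2(\mu_d,\nu_d) = \int_{\Rd} \wassersteinD[2]^2(\thsss\mu_d,\thsss\nu_d)\, \rmd\gammabf_d(\theta) \eqsp.
\]
This is precisely the form in which \eqref{eq:reeves} can enter.

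Next, write $b = \wassersteinD[2]\{\lawGauss(0,\moment_2(\mu_d)),\lawGauss(0,\moment_2(\nu_d))\}$, a quantity that does not depend on $\theta$. For each fixed $\theta$, the four measures $\thsss\mu_d$, $\thsss\nu_d$, $\lawGauss(0,\moment_2(\mu_d))$, $\lawGauss(0,\moment_2(\nu_d))$ all lie in $\calP_2(\rset)$, so the triangle inequality for $\wassersteinD[2]$ applied with the two Gaussians as intermediate measures gives, in both directions,
\[
\abs{\wassersteinD[2](\thsss\mu_d,\thsss\nu_d) - b} \le \wassersteinD[2](\thsss\mu_d,\lawGauss(0,\moment_2(\mu_d))) + \wassersteinD[2](\thsss\nu_d,\lawGauss(0,\moment_2(\nu_d))) \eqsp.
\]
Because $b$ is constant and $\gammabf_d$ is a probability measure, I would then apply the reverse triangle inequality for the $L^2(\gammabf_d)$-norm to obtain
\[
\abs{\swassersteinD[2](\mu_d,\nu_d) - b} \le \left( \int_{\Rd} \abs{\wassersteinD[2](\thsss\mu_d,\thsss\nu_d) - b}^2\, \rmd\gammabf_d(\theta) \right)^{1/2} \eqsp.
\]

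Finally, I would substitute the pointwise bound into this integral, use $(x+y)^2 \le 2x^2 + 2y^2$, and apply \eqref{eq:reeves} once to $\mu_d$ and once to $\nu_d$; this bounds the right-hand side by $\big(2C\constmud_d(\mu_d)+2C\constmud_d(\nu_d)\big)^{1/2}$, which is the claimed inequality after relabeling the universal constant as $\sqrt{2C}$. I expect the only delicate point to be exactly this transfer from the pointwise estimate to the $L^2$-norm: since the Gaussian comparison term $b$ carries no $\theta$-dependence, it cannot be matched to $\wassersteinD[2](\thsss\mu_d,\thsss\nu_d)$ by a naive in-integral triangle inequality, and must instead be treated as a constant function and handled through the reverse triangle inequality in $L^2(\gammabf_d)$. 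Everything else is routine bookkeeping, including tracking the universal constant through the two invocations of \eqref{eq:reeves}.
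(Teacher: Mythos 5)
Your proposal is correct and follows essentially the same route as the paper's proof: a pointwise (in $\theta$) triangle inequality for $\wassersteinD[2]$ through the two Gaussian laws, the elementary bound $(a+b)^2\le 2(a^2+b^2)$, the concentration estimate \eqref{eq:reeves} applied once to each measure, the reverse triangle inequality in $L^2(\gammabf_d)$ with the Gaussian comparison term treated as a constant function, and \Cref{prop:gaussian_sw} with $p=2$ to identify the resulting $L^2(\gammabf_d)$-norm with $\swassersteinD[2](\mu_d,\nu_d)$. The delicate point you flag --- that the $\theta$-independent term must be handled via the reverse triangle inequality in $L^2(\gammabf_d)$ rather than matched inside the integral --- is exactly how the paper proceeds.
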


Since $\wassersteinD[2]\{\lawGauss(0,d^{-1}\moment_2(\mu_d)), \lawGauss(0,d^{-1}\moment_2(\nu_d))\}$ has a closed-form solution by \eqref{eq:wass_gauss}, it provides a computationally efficient approximation of $\swassersteinD[2](\mu_d, \nu_d)$ whose accuracy is quantified by \Cref{thm:sw_cvg}. Next, we identify settings where this approximation is accurate, by analyzing the error $\constmud_d(\mu_d) + \constmud_d(\nu_d)$. 

Our first observation is that $\mu_d$ and $\nu_d$ should necessarily have zero means for the error to go to zero as $d \to \plusinfty$, and we develop a novel approximation of SW that takes into account this constraint. Going back to the definition of $\constmud_d(\mu_d^X)$ in \eqref{eq:def_constmud}, setting $\bX_i = X_i - \PE[X_i]$ and $\bX_i' = X_i' - \PE[X_i]$, we get 
\begin{align}
  \moment_2(\mu_d^X) &=  \bbE[\normLigne{\bX_{1:d}}^2] + \norm{\PE[X_{1:d}]}^2 \label{eq:gamma_center} \\
  \beta_2^2(\mu_d^X) &= \expe{\ps{\bX_{1:d}}{\bX_{1:d}'}^2}+ 4 \expe{\ps{\PE[X_{1:d}]}{\bX_{1:d}}^2} + \norm{\PE[X_{1:d}]}^4 \eqsp. \label{eq:beta_center}
\end{align}
By \Cref{eq:gamma_center,eq:beta_center}, since in practice the norm of the mean $\PE[X_{1:d}]$ is expected to increase linearly with $d^{1/2}$ at least, so are $\moment_2(\mu_d^X)$ and $\beta_2(\mu_d^X)$ as functions of $d$. As a consequence, $\constmud_d(\mu_d^X)$ cannot be shown to converge to $0$ as $d\to\infty$ in this setting, but only to be bounded. However, if the data are centered, the norm of the mean is zero, thus $\constmud_d(\mu_d^X)$ might be decreasing. Therefore, we derive a convenient formula to compute $\swassersteinD[2](\mu_d, \nu_d)$ from $\swassersteinD[2](\bmu_d, \bnu_d)$ where for any  $\xi_d \in \Pens_2(\rset^d)$, $\bxi_d$ is the centered version of $\xi$, \ie~the pushforward measure of $\xi_d$ by $x \mapsto x - \bfm_{\xi_d}$ with $\bfm_{\xi_d} = \int_{\rset^d} y~\rmd \xi_d(y)$. This result is the last ingredient to formulate our approximation of SW.

\begin{proposition} \label{prop:sw_translation}
  Let $\mu_d, \nu_d \in \calP_2(\Rd)$ with respective means $\bfm_{\mu_d}, \bfm_{\nu_d}$. Then, the Sliced-Wasserstein distance of order 2 can be decomposed as
  \begin{equation} \label{eq:sw_translation}
    \swassersteinD[2]^2(\mu_d, \nu_d) = \swassersteinD[2]^2(\bmu_d, \bnu_d) + (1/d)\| \bfm_{\mu_d} - \bfm_{\nu_d} \|^2 \eqsp. 
\end{equation}
\end{proposition}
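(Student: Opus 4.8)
The plan is to push the $d$-dimensional identity down to the real line via the definition \eqref{eq:def_sw} of $\swassersteinD[2]$, apply a one-dimensional decomposition of $\wassersteinD[2]^2$ under translation, and then integrate over $\sphereD$. The one-dimensional fact I would record first is this: if $P, Q \in \calP_2(\rset)$ have means $m_P, m_Q$ and centered versions $\bar P, \bar Q$, then $\wassersteinD[2]^2(P,Q) = \wassersteinD[2]^2(\bar P, \bar Q) + (m_P - m_Q)^2$. To prove it I would use the quantile representation \eqref{eq:wass_1d}: since $\bar P$ is the shift of $P$ by $-m_P$, its quantile function satisfies $F_{\bar P}^{-1}(t) = F_P^{-1}(t) - m_P$, and likewise for $Q$. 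Writing $g(t) = F_P^{-1}(t) - F_Q^{-1}(t)$ and $c = m_P - m_Q$, and using $\int_0^1 F_P^{-1}(t)\,\rmd t = m_P$ so that $\int_0^1 g(t)\,\rmd t = c$, one expands
\begin{equation*}
\wassersteinD[2]^2(\bar P, \bar Q) = \int_0^1 (g(t) - c)^2\,\rmd t = \int_0^1 g(t)^2\,\rmd t - 2c^2 + c^2 = \wassersteinD[2]^2(P, Q) - c^2 \eqsp,
\end{equation*}
which rearranges to the claimed identity.

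Next I would connect this to projections. For fixed $\theta \in \sphereD$, the pushforward $\thss_\sharp \mu_d$ is a univariate law with mean $\ps{\theta}{\bfm_{\mu_d}}$, and its centered version is exactly $\thss_\sharp \bmu_d$: indeed $\thss_\sharp \bmu_d = (\thss \circ T)_\sharp \mu_d$ where $T(x) = x - \bfm_{\mu_d}$, and $\thss(T(x)) = \ps{\theta}{x} - \ps{\theta}{\bfm_{\mu_d}}$ is the map $\thss$ shifted by the mean $-\ps{\theta}{\bfm_{\mu_d}}$. The same holds for $\nu_d$. Applying the one-dimensional fact with $P = \thss_\sharp \mu_d$ and $Q = \thss_\sharp \nu_d$ then gives, for every $\theta$,
\begin{equation*}
\wassersteinD[2]^2(\thss_\sharp \mu_d, \thss_\sharp \nu_d) = \wassersteinD[2]^2(\thss_\sharp \bmu_d, \thss_\sharp \bnu_d) + \ps{\theta}{\bfm_{\mu_d} - \bfm_{\nu_d}}^2 \eqsp.
\end{equation*}

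Finally I would integrate both sides against $\unifS$. The first term integrates to $\swassersteinD[2]^2(\bmu_d, \bnu_d)$ by definition. For the quadratic term I would use the second-moment identity on the sphere, $\int_{\sphereD} \theta\theta^\top \,\rmd\unifS(\theta) = d^{-1}\bfI_d$, which follows from the rotational invariance of $\unifS$; writing $v = \bfm_{\mu_d} - \bfm_{\nu_d}$, this yields $\int_{\sphereD} \ps{\theta}{v}^2\,\rmd\unifS(\theta) = v^\top (d^{-1}\bfI_d) v = d^{-1}\norm{v}^2$, giving exactly \eqref{eq:sw_translation}. There is no genuine obstacle here; the content lives entirely in the two clean computations (the translation lemma and the sphere moment), and the only point requiring care is the bookkeeping that verifies centering commutes with projection, i.e.\ that $\thss_\sharp \bmu_d$ is the centering of $\thss_\sharp \mu_d$, which is what allows the pointwise decomposition to reassemble into SW of the centered measures after integration.
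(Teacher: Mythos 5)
Your proof is correct and follows essentially the same route as the paper's: decompose $\wassersteinD[2]^2(\thss_\sharp\mu_d,\thss_\sharp\nu_d)$ for each $\theta$ into the centered part plus the squared mean difference, then integrate over $\sphereD$ using $\int_{\sphereD}\theta\theta^\top\,\rmd\unifS(\theta)=d^{-1}\bfI_d$. The only difference is that you prove the one-dimensional translation lemma directly from the quantile representation \eqref{eq:wass_1d}, whereas the paper invokes the general translation property of $\wassersteinD[2]$ from \cite[Remark 2.19]{peyre2019}; your explicit check that centering commutes with projection is a point the paper leaves implicit.
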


Based on \eqref{prop:sw_translation}, instead of estimating $\swassersteinD[2](\mu_d, \nu_d)$ with  $\wassersteinD[2]\{\lawGauss(0,d^{-1}\moment_2(\mu_d)), \lawGauss(0,d^{-1}\moment_2(\nu_d))\}$ directly, we propose approximating $\swassersteinD[2](\bmu_d, \bnu_d)$ with
$\wassersteinD[2]\{\lawGauss(0,d^{-1}\moment_2(\bmu_d)), \lawGauss(0,d^{-1}\moment_2(\bnu_d))\}$ and then using \eqref{eq:sw_translation}. This strategy yields our final approximation of SW, which is defined for any $\mu_d,\nu_d \in \Pens_2(\rset^d)$ as %
\begin{equation} \label{eq:final_approx}
  \widehat{\mathbf{SW}}_2^2(\mu_d, \nu_d) = \wassersteinD[2]^2\{\lawGauss(0,d^{-1}\moment_2(\bmu_d)), \lawGauss(0,d^{-1}\moment_2(\bnu_d))\}  + (1/d)\| \bfm_{\mu_d} - \bfm_{\nu_d} \|^2 \eqsp,
\end{equation}
where for $\xi_d \in \{\bmu_d, \bnu_d\}$, $\moment_2(\xi_d)$ is  defined in \eqref{eq:def_constmud_d}. Note that \eqref{eq:final_approx} can be simplified since  by \eqref{eq:wass_gauss}, $\wassersteinD[2]^2\{\lawGauss(0,d^{-1}\moment_2(\bmu_d)), \lawGauss(0,d^{-1}\moment_2(\bnu_d))\}= d^{-1} ( \moment_2(\bmu_d)^{1/2} - \moment_2(\bnu_d)^{1/2} )^2$. Besides, if $\mu_d$ and $\nu_d$ are empirical distributions, $\widehat{\mathbf{SW}}_2(\mu_d, \nu_d)$ has a closed-form expression: given $\xi_d = n^{-1} \sum_{i=1}^n \updelta_{x_i} \in \calP_2(\Rd)$ where $\{ x^{(j)} \}_{j=1}^n \in \rset^{dn}$ are $d$-dimensional samples, we then have $\bfm_{\xi_d} = n^{-1} \sum_{j=1}^n x^{(j)}$, and $\moment_2(\xi_d) = n^{-1} \sum_{j=1}^n \| x^{(j)} \|^2$. The associated computational complexity is therefore in $\calO(dn)$.

Hence, we introduced an alternative technique to estimate SW which does not rely on a finite set of random projections, as opposed to the commonly used Monte Carlo technique \eqref{eq:mc_sw}. Our approach thus eliminates the need for practitioners to tune the number of projections $L$, but also to sort the projected data. As a consequence, it is more efficient to compute $\widehat{\mathbf{SW}}_2(\mu_d, \nu_d)$ than $\swassersteinD[2, L](\mu_d, \nu_d)$ for large $L$. We illustrate this latter point with empirical results in \Cref{sec:exp}. \vspace{-2mm}

\subsection{Error analysis under weak dependence} \label{sec:weakdep}

We have discussed why centering the data is necessary 
to ensure that the approximation error %
goes to zero with increasing $d$. Next, we introduce a weak dependence condition under which the error is guaranteed to decrease as $d$ increases. 

We first consider a setting mentioned in \cite{reeves2017}  where $\mu_d = \mu^{(1)} \otimes \dots \otimes \mu^{(d)}$ and $\nu_d = \nu^{(1)} \otimes \dots \otimes \nu^{(d)}$, $\otimes$ denoting the tensor product of measures, and $\mu^{(j)}, \nu^{(j)} \in \Pens_4(\rset)$ for $j \in \{1,\dots,d\}$. We prove in this case that $\wassersteinD[2]\{\lawGauss(0,d^{-1}\moment_2(\mu_d)), \lawGauss(0,d^{-1}\moment_2(\nu_d))\}$ converges to $\swassersteinD[2](\mu_d, \nu_d)$ at a rate of $d^{-1/8}$. This result is reported in the supplementary document, and can
be interpreted as an extension of \cite[Corollary 3]{reeves2017} for SW.

We emphasize that the assumptions of this first setting severely restrict the
scope of application of our approximation method: in several statistical and
machine learning tasks, the random variables of interest $\{X_i\}_{i=1}^d$  are not independent from each other (e.g. for image data, each $X_i$ typically represents the value of a pixel at a certain position, thus depends on the neighboring pixels). Therefore, we relax this independence condition by considering a concept of `weak dependence' inspired by \cite{doukhan2007} and properly defined in \Cref{def:weak_dep}.

\begin{definition} \label{def:weak_dep}
  Let $(X_j)_{j \in\nsets}$ be a stationary sequence of one-dimensional random variables with mean zero, \ie~$X_i$ and $X_j$ have the same distribution for any $i,j\in\nsets$ and $\PE[X_1] =0$. We say that $(X_j)_{j \in\nsets}$ is \emph{fourth-order weakly dependent} if there exist some constant $K \geq 0$ and a nonincreasing sequence of real coefficients $\{\rho(n)\}_{n \in \bbN}$ such that, for any $i, j \in \nsets$, $i \leq j$,
  \begin{align} \label{eq:weak_dep}
    | \Cov(X_i^2, X_j^2) | &\leq K \rho(j-i) \eqsp, \qquad     | \Cov(X_i, X_j) | \leq K \rho(j-i) \eqsp.
  \end{align}
In addition, the sequence $\{\rho(n)\}_{n \in \bbN}$ satisfies  $\sum_{n=0}^{+\infty} \rho(n) \leq \rho_{\infty} < \plusinfty$.

\end{definition}

Intuitively, in practical applications, the weak dependence condition would essentially require that the components of the observations not to exhibit strong correlations; yet, they are allowed to depend on each other.
Furthermore, since our weak dependence condition is weaker than the one introduced in \cite[Theorem 1]{doukhan2007}, it is satisfied by the various examples of models described in \cite[Section 5]{doukhan2007}. We present some of them below, to illustrate \Cref{def:weak_dep} more clearly. \vspace{-2mm}
\begin{enumerate}[leftmargin=6mm, label=\arabic*)]
  \item \emph{Gaussian processes} and \emph{associated processes} \cite[Section 3.1]{doukhan1999}, provided that they are stationary. 
  \item \emph{Bernoulli shifts}: $X_t = H(\veps_t, \dots, \veps_{t-r})$ for $t \in \nsets$, where $H : \bbR^{r+1} \to \bbR$ is a measurable function and $(\veps_i)_{i \in \nsets}$ is a sequence of \iid~real random variables. A simple example of such process is given by \emph{moving-average models}. 
  \item \emph{Autoregressive models}, defined as $X_t = f(X_{t-1}, \dots, X_{t-r}) + \veps_t$ for $t \in \nsets$, where $(\veps_i)_{i \in \nsets}$ a sequence of \iid~real random variables with $\PE\abs{\veps_1} < \infty$, and $\abs{f(u_1,\dots,u_r) - f(v_1,\dots,v_r)} \leq \sum_{i=1}^r a_i \abs{u_i-v_i}$ for some $a_1, \dots, a_r \geq 0$ such that $\big(\sum_{i=1}^r a_i\big)^{1/r} < 1$. 
\end{enumerate}

We then consider a sequence of fourth-order weakly dependent random variables $(X_j)_{j\in\nsets}$, and prove that $\constmud_d(\mu_d^X)$ goes to zero as $d\to\infty$, with a rate of convergence depending on $\{\rho(n)\}_{n\in\nset}$. This result is given in the supplementary document, and helps us refine \Cref{thm:sw_cvg} under this weak dependence condition: the next corollary establishes that the error approaches 0 at a rate of $d^{-1/8}$.

\begin{corollary} \label{cor:weakdep}
    Let $(X_j)_{j \in\nsets}$ and $(Y_j)_{j\in\nsets}$ be sequences of random variables which are fourth-order weakly dependent. Set for any $d\in\nsets$, $X_{1:d} = \{X_j\}_{j=1}^d$ and $Y_{1:d} = \{Y_j\}_{j=1}^d$, and denote by $\mu_d$, $\nu_d$ the distributions of $X_{1:d}$, $Y_{1:d}$ respectively. Then, there exists a universal constant $C > 0$ such that $\left| \swassersteinD[2](\mu_d, \nu_d) - \wassersteinD[2](\lawGauss(0,d^{-1}\moment_2(\mu_d)), \lawGauss(0,d^{-1}\moment_2(\nu_d))) \right| \leq C d^{-1/8}$.
\end{corollary}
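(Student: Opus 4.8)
The plan is to combine \Cref{thm:sw_cvg} with a quantitative, dimension-dependent bound on the error term $\constmud_d$ \eqref{eq:def_constmud} that holds under fourth-order weak dependence. Since fourth-order weak dependence forces finite fourth (hence second) moments, we have $\mu_d, \nu_d \in \Pens_2(\rset^d)$, so \Cref{thm:sw_cvg} applies and gives $|\swassersteinD[2](\mu_d,\nu_d) - \wassersteinD[2]\{\lawGauss(0,\moment_2(\mu_d)), \lawGauss(0,\moment_2(\nu_d))\}| \leq C(\constmud_d(\mu_d)+\constmud_d(\nu_d))^{1/2}$. It therefore suffices to establish $\constmud_d(\mu_d) = O(d^{-1/4})$, and identically for $\nu_d$; the stated rate $d^{-1/8}$ then follows merely by taking the square root on the right-hand side of \Cref{thm:sw_cvg}.

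To control $\constmud_d$, I would bound each of its three constituent quantities in \eqref{eq:def_constmud_d} separately. By stationarity, $\moment_2(\mu_d) = \sum_{j=1}^d \PE[X_j^2] = d\,\PE[X_1^2] = O(d)$. For the fluctuation term, write $\norm{X_{1:d}}^2 - \moment_2(\mu_d) = \sum_{j=1}^d (X_j^2 - \PE[X_j^2])$ and apply Jensen's inequality to get $\alpha(\mu_d) \leq \Var(\sum_{j=1}^d X_j^2)^{1/2}$. Expanding the variance as a double sum of covariances and invoking the first weak-dependence bound $|\Cov(X_i^2, X_j^2)| \leq K\rho(j-i)$ together with the summability $\sum_{n \geq 0}\rho(n)\leq\rho_\infty$ yields $\Var(\sum_{j=1}^d X_j^2) \leq 2Kd\rho_\infty = O(d)$, hence $\alpha(\mu_d) = O(d^{1/2})$.

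For the $\beta_q$ terms I would exploit that $X'_{1:d}$ is an independent copy: expanding $\PE[\ps{X_{1:d}}{X'_{1:d}}^2] = \sum_{i,j=1}^d \PE[X_iX_j]\PE[X'_iX'_j] = \sum_{i,j=1}^d \Cov(X_i,X_j)^2$, where the last equality uses the zero-mean assumption. Bounding $\Cov(X_i,X_j)^2 \leq K^2\rho(0)\rho(j-i)$ (using that $\rho$ is nonincreasing so $\rho(j-i) \leq \rho(0)$) and summing as before gives $\beta_2(\mu_d)^2 = O(d)$, while Jensen's inequality gives $\beta_1(\mu_d) \leq \beta_2(\mu_d) = O(d^{1/2})$. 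Substituting these three estimates into \eqref{eq:def_constmud}, the three contributions are of order $d^{-1}\cdot d^{1/2} = d^{-1/2}$, then $d^{-1}(d\cdot d^{1/2})^{1/2} = d^{-1/4}$, and finally $d^{-1}\,d^{1/5}(d^{1/2})^{4/5} = d^{-2/5}$; the middle term dominates, so $\constmud_d(\mu_d) = O(d^{-1/4})$.

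The technical heart, and the step I expect to be the main obstacle, is the pair of variance-type estimates showing $O(d)$ growth of $\Var(\sum_j X_j^2)$ and of $\sum_{i,j}\Cov(X_i,X_j)^2$ from the summability of $\rho$; this is precisely where weak dependence rather than independence is used, and it is carried out in the supplement as the bound on $\constmud_d$ under \Cref{def:weak_dep}. Once those estimates are in hand, the remainder is bookkeeping: tracking the polynomial exponents of $d$ and invoking \Cref{thm:sw_cvg}. Thus the corollary is essentially the assembly of the supplement's bound on $\constmud_d$ with \Cref{thm:sw_cvg}, the square root in the latter converting the $d^{-1/4}$ decay of $\constmud_d$ into the stated $d^{-1/8}$ rate.
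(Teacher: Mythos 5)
Your proposal is correct and follows essentially the same route as the paper: the supplement's Proposition bounding $\constmud_d$ under fourth-order weak dependence establishes exactly the estimates $\moment_2(\mu_d)=O(d)$, $\alpha(\mu_d)=O(d^{1/2})$, $\beta_1(\mu_d),\beta_2(\mu_d)=O(d^{1/2})$ via the same covariance expansions, giving $\constmud_d(\mu_d)=O(d^{-1/4})$ with the $d^{-1/4}$ term dominating, and the corollary then follows from \Cref{thm:sw_cvg} by taking the square root. The only cosmetic difference is that you fold the diagonal variance terms into the constant slightly differently (e.g.\ your $2Kd\rho_\infty$ omits the $Kd\rho(0)$ diagonal contribution, which only affects the constant, not the rate).
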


Hence, by replacing the independence condition of the first setting with weak dependence, we broaden the scope of application whilst guaranteeing that the approximation error goes to zero as $d$ increases. We finally note that in these two settings, the data are required to have zero mean, which is automatically verified with our approximation method since we estimate SW between the centered distributions (see eq.~\eqref{eq:final_approx}).

\section{Experiments} \label{sec:exp}

\paragraph{Synthetic experiments.}
The goal of these experiments is to illustrate our theoretical results derived in \Cref{sec:theory}. In each setting, we generate two sets of $d$-dimensional samples, denoted by $\{ x^{(j)} \}_{j=1}^n, \{ y^{(j)} \}_{j=1}^n \in \rset^{dn}$ with $n = 10^4$. We then approximate SW between their empirical distributions in $\calP_2(\Rd)$, given by $\mu_d = n^{-1} \sum_{i=1}^n \updelta_{x_i}$ and $\nu_d = n^{-1} \sum_{i=1}^n \updelta_{y_i}$.

First, we analyze the consequences of centering data. Here, $\{ x^{(j)} \}_{j=1}^n, \{ y^{(j)} \}_{j=1}^n \in \rset^{dn}$ are $dn$ independent samples from Gaussian or Gamma distributions: see the supplementary document for more details. We compute $| \wassersteinD[2]\{\lawGauss(0,d^{-1}\moment_2(\mu_d)), \lawGauss(0,d^{-1}\moment_2(\nu_d))\} - \swassersteinD[2](\mu_d, \nu_d)|$ on one hand, and $| \wassersteinD[2]\{\lawGauss(0,d^{-1}\moment_2(\bmu_d)), \lawGauss(0,d^{-1}\moment_2(\bnu_d))\} - \swassersteinD[2](\bmu_d, \bnu_d)|$ on the other hand. In the Gaussian case, the exact value of $\swassersteinD[2](\mu, \nu)$ is known (we report it in the supplementary document), while for the Gamma distributions, it is approximated with Monte Carlo based on $2 \times 10^4$ random projections. \Cref{fig:noncentered,fig:centered} show that the error goes to zero as $d$ increases if the data are centered. This confirms our analysis provided in \Cref{sec:approx_sw} about the influence of the mean, and in \Cref{sec:weakdep} on sequences of independent random variables.

\begin{figure}[t!]
\centering
  \subfigure[Nonzero means]{
    \includegraphics[width=.226\linewidth]{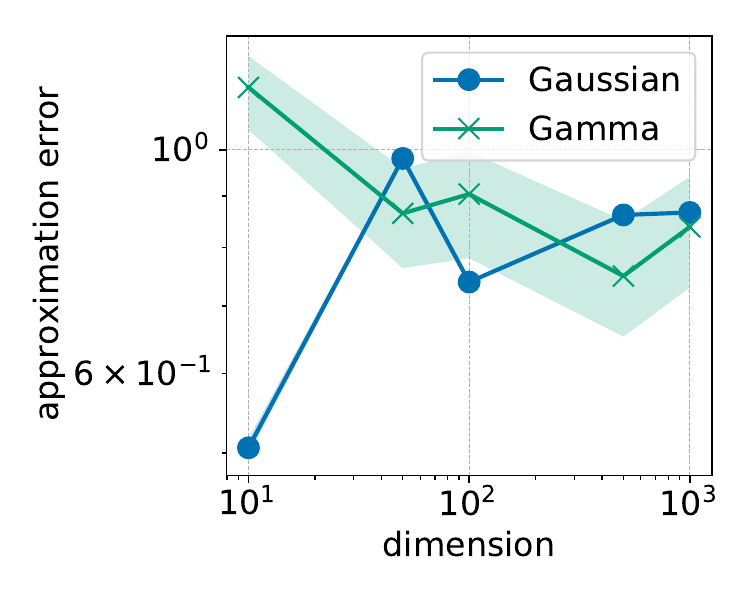}
    \label{fig:noncentered}
  }
  \subfigure[Zero means]{
    \includegraphics[width=.226\linewidth]{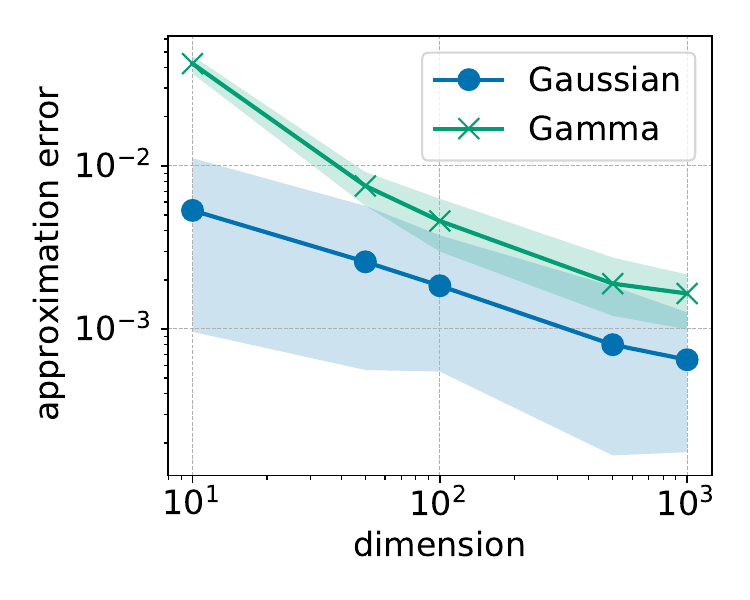}
    \label{fig:centered}
  }
  \subfigure[Gaussian noise]{
  \includegraphics[width=.226\linewidth]{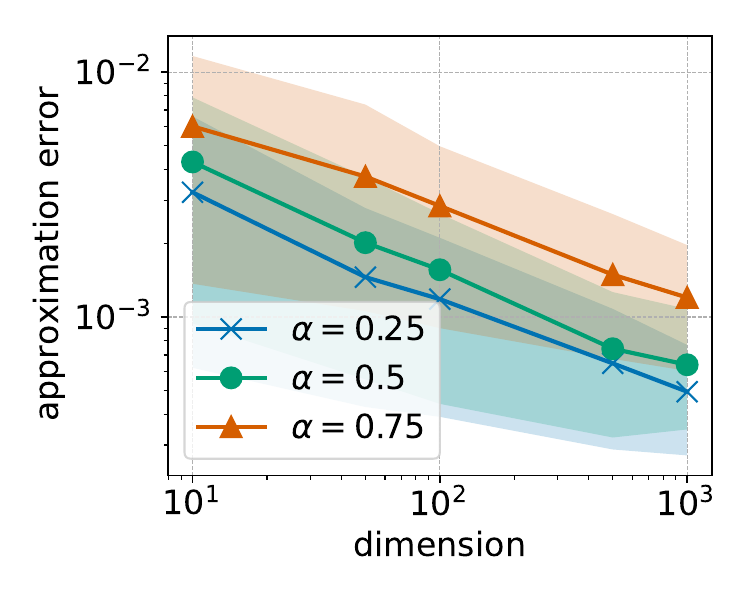}
  \label{fig:weakdep_gaussian}
  }
  \subfigure[Student-$t$ noise]{
    \includegraphics[width=.226\linewidth]{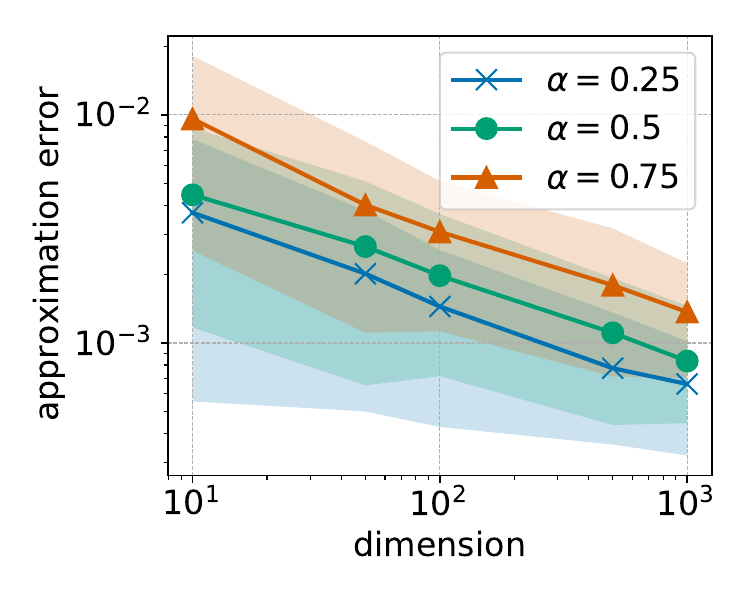}
    \label{fig:weakdep_student}
  }
  \vspace{-3mm}
  \caption{Analysis of the approximation according to the dimension: in \Cref{fig:noncentered,fig:centered}, data have independent components; in \Cref{fig:weakdep_gaussian,fig:weakdep_student}, they are stationary AR(1) processes. Errors are averaged over 100 runs and reported on log-log scale with their 10th-90th percentiles.} \label{fig:cvg}
  \vspace{-2mm}
\end{figure}

\begin{figure}[t!]
\centering
  \subfigure{
    \includegraphics[width=.38\linewidth]{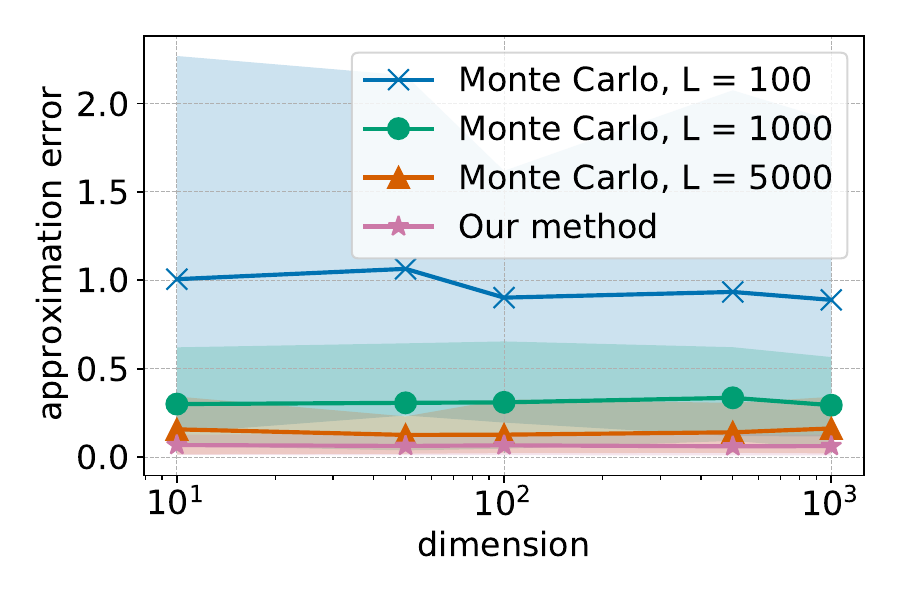}
  }
  \subfigure{
    \includegraphics[width=.316\linewidth]{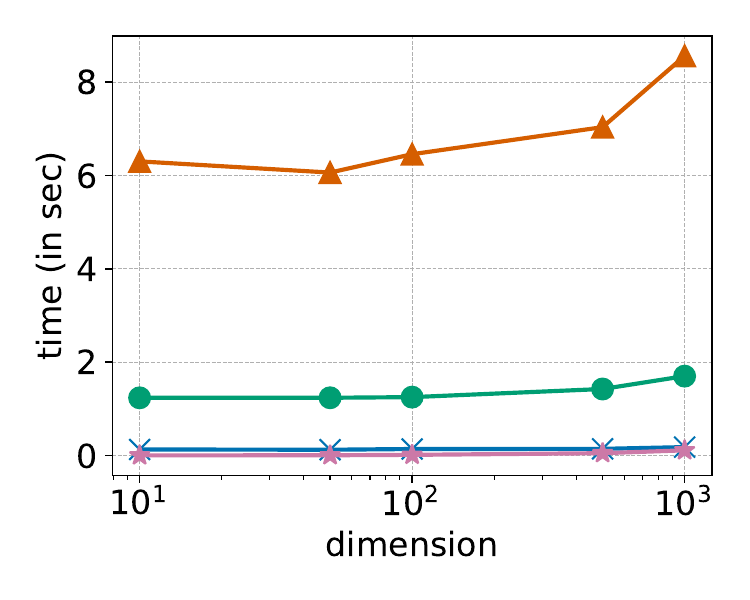}
  }
  \vspace{-3mm}
  \caption{Comparison of different methods to approximate SW, according to their accuracy (left) and computation time (right). The datasets contain $dn$ samples independently drawn from Gamma distributions, with $d \in [10^1, 10^3]$ and $n = 10^4$. Results are averaged over 100 runs.}
  \label{fig:time}
  \vspace{-3mm}
\end{figure}

Next, we consider autoregressive processes of order one (AR(1)). An AR(1) process is defined as $X_1 = \veps_1$ and, for $t \in \nsets$, $X_t = \alpha X_{t-1} + \veps_t$, where $\alpha \in [0, 1]$ and $(\veps_i)_{i \in \bbN^*}$ is an \iid~sequence of real random variables with $\PE[\veps_1] = 0$ and finite second-order moment. If $\alpha < 1$, the process has a stationary distribution and $(X_j)_{j\in\nsets}$ satisfies the weak dependence condition in its stationary regime \cite{doukhan2008}. In practice, we generate a sample by using this recursion formula for $10^4 + d$ steps, and keeping the last $d$ samples. The discarded samples correspond to a ``burn-in'' phase which helps reaching the stationary solution of the process. We generate $\{x^{(j)}\}_{j=1}^n$ and $\{y^{(j)}\}_{j=1}^n$ using the same distribution for the noise (either a Gaussian or Student's $t$-distribution, as described in the supplementary document). This means that both datasets come from the same distribution, thus $\swassersteinD[2](\mu, \nu)$ is exactly 0. We plot on \Cref{fig:weakdep_gaussian,fig:weakdep_student} the approximation error according to $d \in [10, 10^3]$ for different values of $\alpha$. The error converges to zero with increasing $d$, which is consistent with \Cref{cor:weakdep}. 

Note that \Cref{fig:cvg} exhibits rate of convergence that are better than the one in $d^{-1/8}$ derived in \Cref{sec:weakdep}: in \Cref{fig:centered}, the slope is approximately $-0.45$ (Gaussian) and $-0.7$ (Gamma), and in \Cref{fig:weakdep_gaussian,fig:weakdep_student}, it is on average $-0.35$. This suggests that our theoretical bounds might be improved, and we further investigate this aspect for the Gaussian case: we consider the case where $\{ x^{(j)} \}_{j=1}^n, \{ y^{(j)} \}_{j=1}^n$ are $n$ independent samples from Gaussian distributions with diagonal covariance matrices, and we prove that $\bbE| \wassersteinD[2]\{\lawGauss(0,d^{-1}\moment_2(\bmu_d)), \lawGauss(0,d^{-1}\moment_2(\bnu_d))\} - \swassersteinD[2](\bmu_d, \bnu_d)|$ goes to 0 as $dn \to \plusinfty$ with a convergence rate in $d^{-1/2}n^{-1/2}$. We provide the complete statement and formal proof in \Cref{sec:indep_gen} (\Cref{prop:rate_gauss}). This result is consistent with \Cref{fig:centered}, and is a first encouraging step towards the following research direction: we will study if our proofs and the ones in \cite{reeves2017} can be refined when assuming additional structure on the distributions (e.g., sub-Gaussian and sub-exponential), in order to identify the settings under which our current bounds are tight or can be improved.

Finally, we compare our approximation scheme against the standard Monte Carlo estimation, in terms of accuracy and computation time. We use the same setting as in \Cref{fig:centered}, where the $dn$ samples are independently drawn from Gamma distributions. We compute $\widehat{\mathbf{SW}}_2(\mu_d, \nu_d)$ \eqref{eq:final_approx} and $\swassersteinD[2, L](\mu_d, \nu_d)$ \eqref{eq:mc_sw} with $L \in \{100, 1000, 5000 \}$, and we compare each approximation with $\swassersteinD[2, 2 \times 10^4](\mu_d, \nu_d)$, which we consider as the exact value of SW. \Cref{fig:time} reports the approximation error and computation time of each scheme for $d \in [10, 10^4]$, and shows that our method is more accurate and faster than Monte Carlo. In particular, when $d=10^3$, the average computation time of our technique is 0.02s, while the second best approximation (Monte Carlo with $L=5000$) takes more than 8s. Besides, we observe that Monte Carlo is very sensitive to the hyperparameters, since it loses accuracy when $L$ decreases and gets slower as $L$ and $d$ increase. This observation is consistent with the computational complexity of $\swassersteinD[2, L]$ recalled in \Cref{sec:ot}. On the other hand, our approximation scheme is extremely efficient even for large $d$ and $n$, since it is based on a simple deterministic formula which does not require projecting and sorting data along random directions. \vspace{-2mm}

\begin{table}[t!]
    \centering
    \begin{tabular}{|l|l|c|l|l|l|l|}
    \hline
    \multirow{2}{*}{\textbf{Dataset}} & \multirow{2}{*}{\textbf{Model}} & \multirow{2}{*}{\textbf{FID}} & \multicolumn{2}{c|}{\textbf{$T_{\text{SW}}$ (s/epoch)}} & \multicolumn{2}{c|}{\textbf{$T_{\text{tot}}$ (s/epoch)}}\\ 
    \cline{4-7}
    & & & \textbf{GPU} & \textbf{CPU} & \textbf{GPU} & \textbf{CPU}  \\ 
    \hline
    MNIST & SWG & 22.41 $\pm$ 2.34 & 1.3 & 1.4 $\times 10^2$ & 4.5 & 2.7 $\times 10^2$ \\
    & Reg-SWG & 15.53 $\pm$ 0.88 & 1.1 & 1.1 $\times 10^2$ & 6.5 & 3.0 $\times 10^2$ \\
    & Reg-det-SWG & 15.72 $\pm$ 0.57 & 0.07 & 0.2 & 5.3 & 1.5 $\times 10^2$ \\
    \hline
    CelebA & SWG & 31.04 $\pm$ 2.78 & 10.1 & 2.7 $\times 10^3$ & 3.9 $\times 10^2$ & 1.6 $\times 10^4$ \\
    & Reg-SWG & 24.14 $\pm$ 0.48 & 10.0 & 2.7 $\times 10^3$ & 4.4 $\times 10^2$ & 2.0 $\times 10^4$\\
    & Reg-det-SWG & 23.65 $\pm$ 0.93 & 1.3 & 2.6 & 4.2 $\times 10^2$ & 1.7 $\times 10^4$ \\
    \hline
    \end{tabular}
    \vspace{2mm}
    \caption{Results obtained after training generative models on MNIST and CelebA, averaged over 5 runs. FID are reported with their standard deviation (the lower FID, the better). $T_{\text{SW}}$ denotes the average time per epoch for approximating SW. $T_{\text{tot}}$ is the average running time per epoch.}
    \label{tab:gen_results}
    \vspace{-5mm}
\end{table}

\paragraph{Image generation.}
Finally, we leverage our theoretical insights to design a novel method for a typical generative modeling application. The problem consists in tuning a neural network that takes as input $k$-dimensional samples from a reference distribution (e.g., uniform or Gaussian), to generate images of dimension $d > k$. During the training phase, the parameters of the network are updated by  iteratively minimizing a dissimilarity measure between the dataset to fit and the generated images. 

In \cite{deshpande2018generative}, the dissimilarity measure is Monte Carlo SW approximated with $10^4$ random projections, and the resulting generative model is called the \emph{Sliced-Wasserstein generator} (SWG). 
This model performs well on moderately high-dimensional image datasets (e.g., 28 $\times$ 28 for MNIST images \cite{lecun-mnisthandwrittendigit-2010}). However, for very large dimensions (e.g., $64 \times 64 \times 3$ for the CelebA dataset \cite{liu2015faceattributes}), Monte Carlo SW requires more than $10^4$ random projections to capture relevant information, which leads to very expensive training iterations and potential memory issues. To offer better scalability, SWG can be augmented with a discriminator network \cite[Section 3.2]{deshpande2018generative} that aims at finding a lower-dimensional space in which the two projected datasets are clearly distinguishable. The intuition behind this heuristic is that the more distinct the two datasets are from each other, the fewer projection directions Monte Carlo SW requires to provide useful information. The training then consists in optimizing the generator's and discriminator's objective functions in an alternating fashion.

Our novel approach builds on SWG and modifies %
the saddle-point problem in \cite[Section 3.2]{deshpande2018generative}: motivated by the gain in accuracy and time illustrated in \Cref{fig:time} on high-dimensional datasets, we propose to replace Monte Carlo SW with our approximate SW \eqref{eq:final_approx} in the generator's objective; then, to make sure that our approximation is accurate, we regularize the discriminator's objective:
\begin{align}
  \max_{\psi}~L(\psi) &+ \lambda_1 \big\| \Cov[ d'_\psi(X) ] \big\|_F^2 + \lambda_1 \big\| \Cov[ d'_\psi(g_\phi(Z)) ] \big\|_F^2 \label{eq:reg1} \\
  &+ \lambda_2~\bbE \left[ \| d'_\psi(X) \|^{-2} \right] + \lambda_2~\bbE \left[ \| d'_\psi(g_\phi(Z)) \|^{-2} \right] \label{eq:reg2}
\end{align}
where $L$ is the discriminator's loss used in SWG, $g_\phi$ and $d'_\psi$ are the generator's last layer and the discriminator's penultimate layer respectively (parameterized by $\phi$, $\psi$), $X$ and $Z$ are the random variables corresponding to the images to fit and the generator's input, $\Cov$ denotes the covariance matrix, $\| \cdot \|_F$ the Frobenius norm, and $\lambda_1, \lambda_2 \geq 0$. The regularization in \eqref{eq:reg1} enforces the weak dependence condition (\Cref{cor:weakdep}), while \eqref{eq:reg2} prevents the network to converge to $d'_\psi = 0$. We call this generative adversarial network \emph{regularized deterministic SWG} (reg-det-SWG).

\begin{figure}[t!]
\centering
  \subfigure[SWG (FID = 19.52)]{
    \includegraphics[trim = {3cm 3cm 3cm 3cm}, clip, width=.47\linewidth]{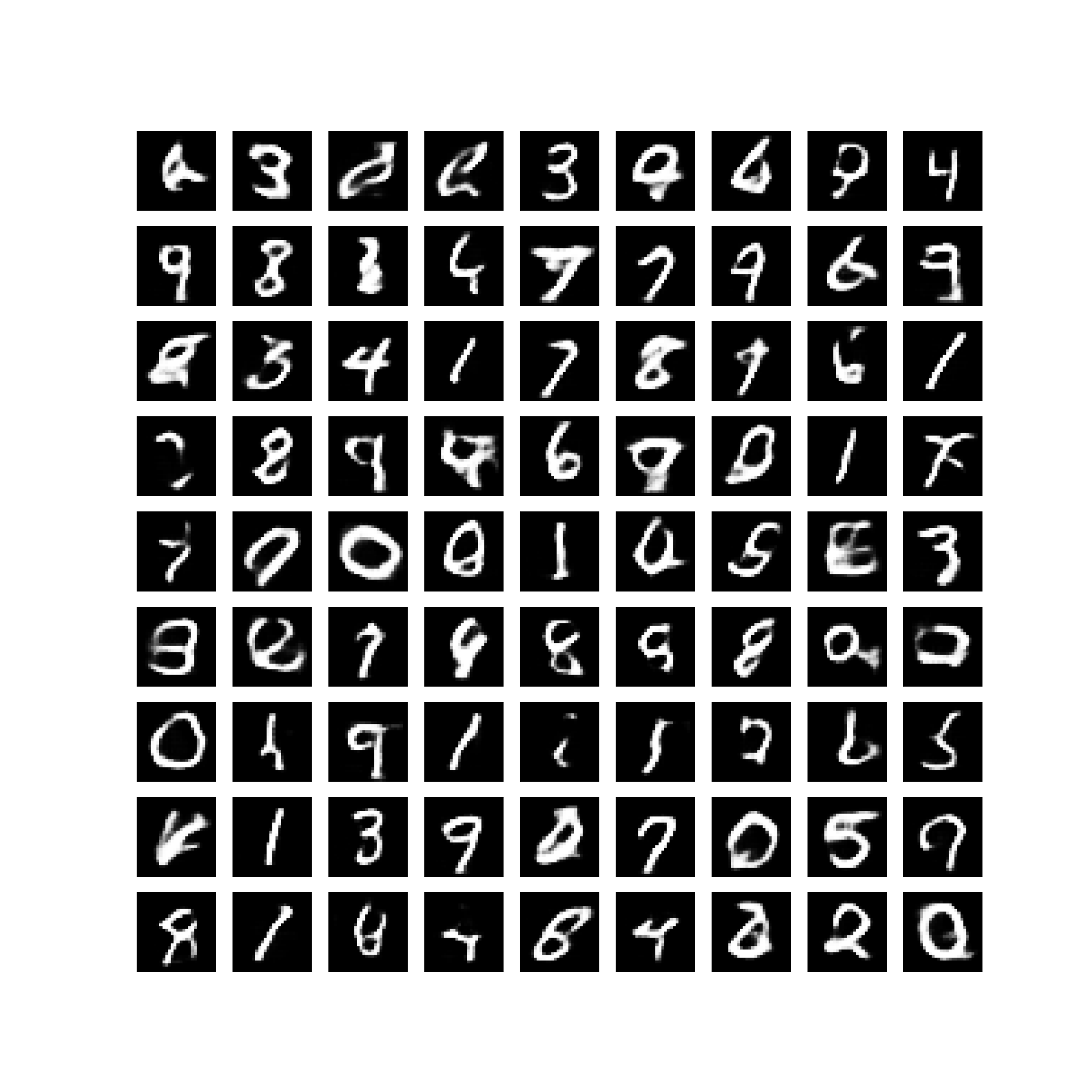}
  }
  \subfigure[Reg-det-SWG (FID = 14.87)]{
    \includegraphics[trim = {3cm 3cm 3cm 3cm}, clip, width=.47\linewidth]{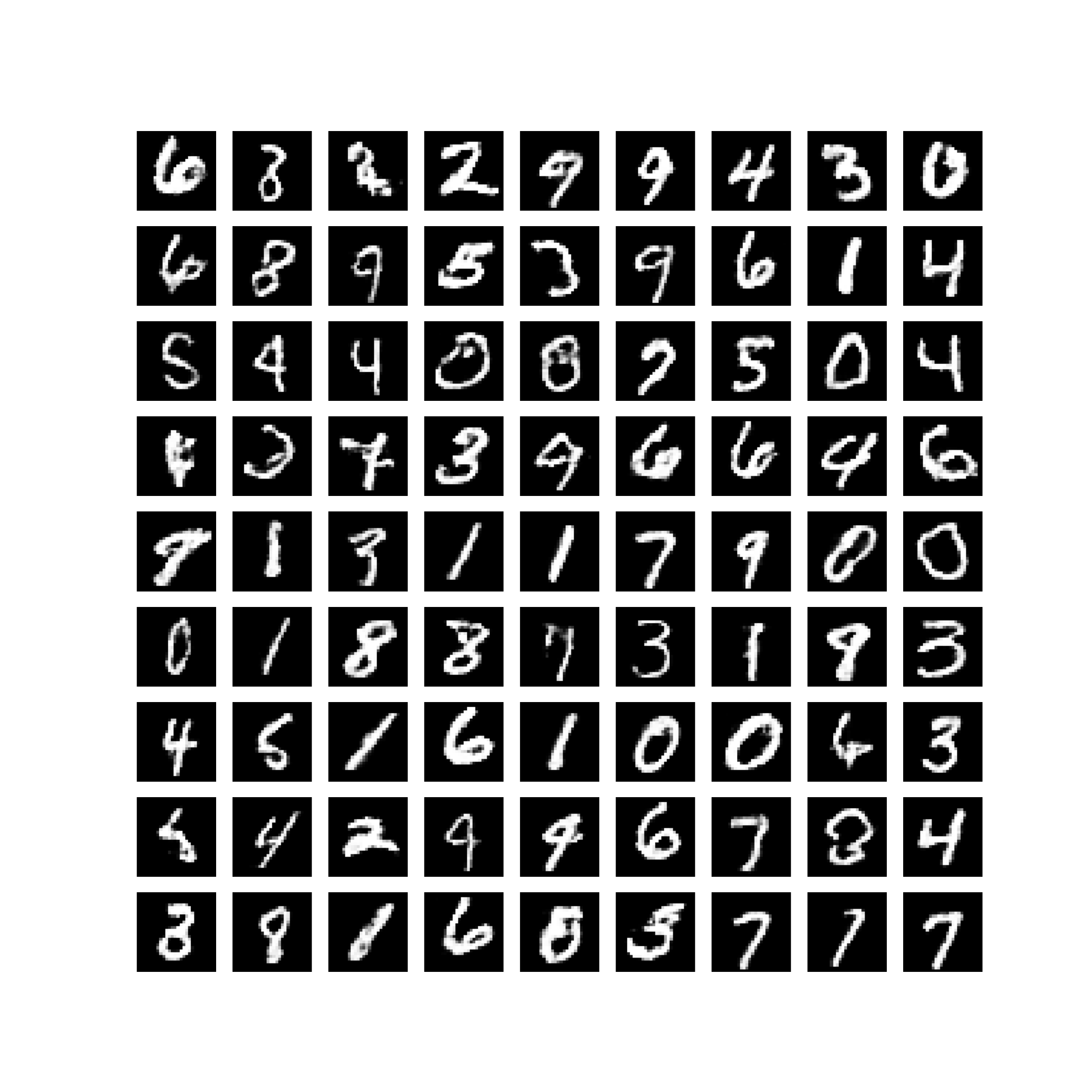}
  } \vspace{-3.5mm}

  \subfigure[SWG (FID = 27.75)]{
    \includegraphics[trim = {3cm 3cm 3cm 3cm}, clip, width=.47\linewidth]{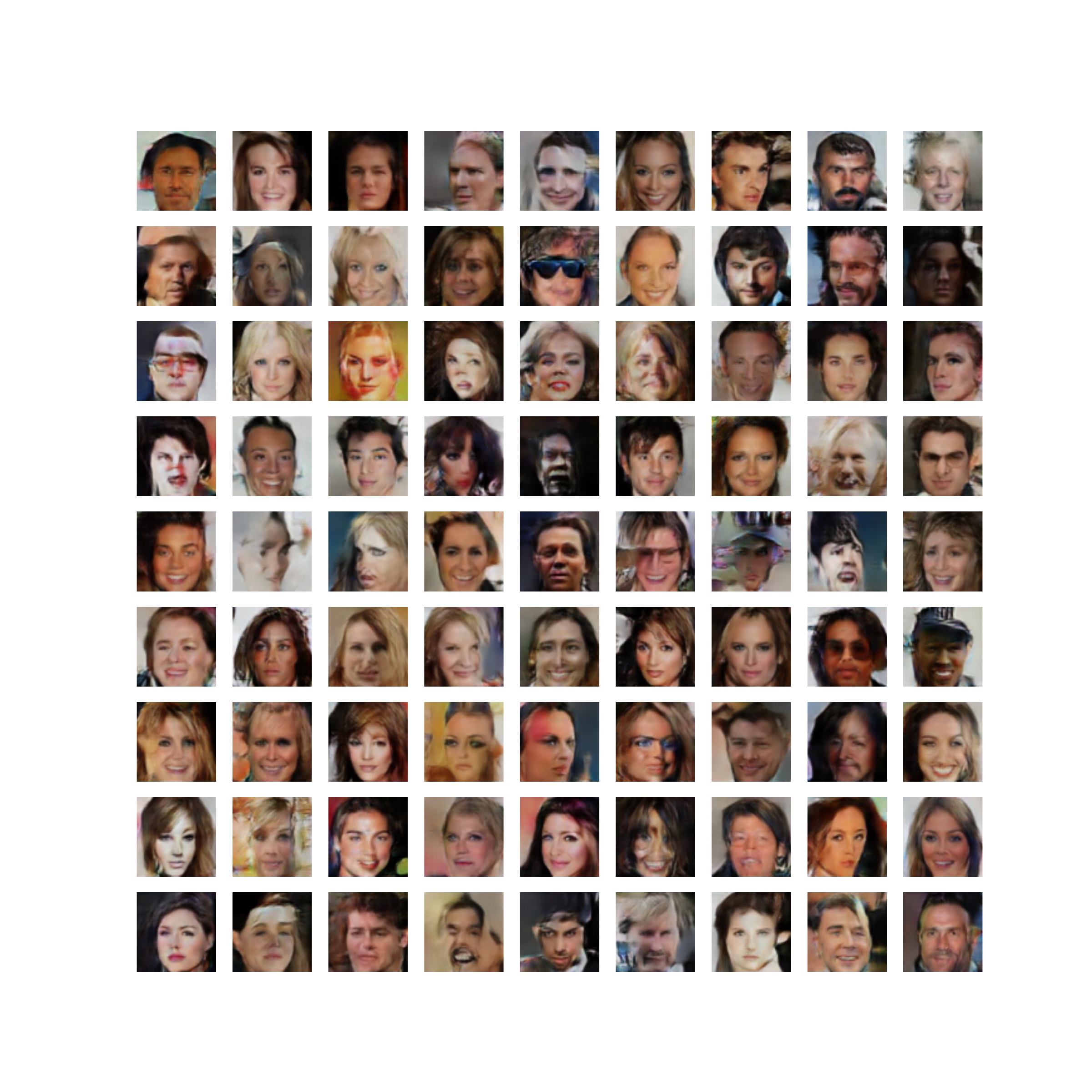}
  }
  \subfigure[Reg-det-SWG (FID = 22.87)]{
    \includegraphics[trim = {3cm 3cm 3cm 3cm}, clip, width=.47\linewidth]{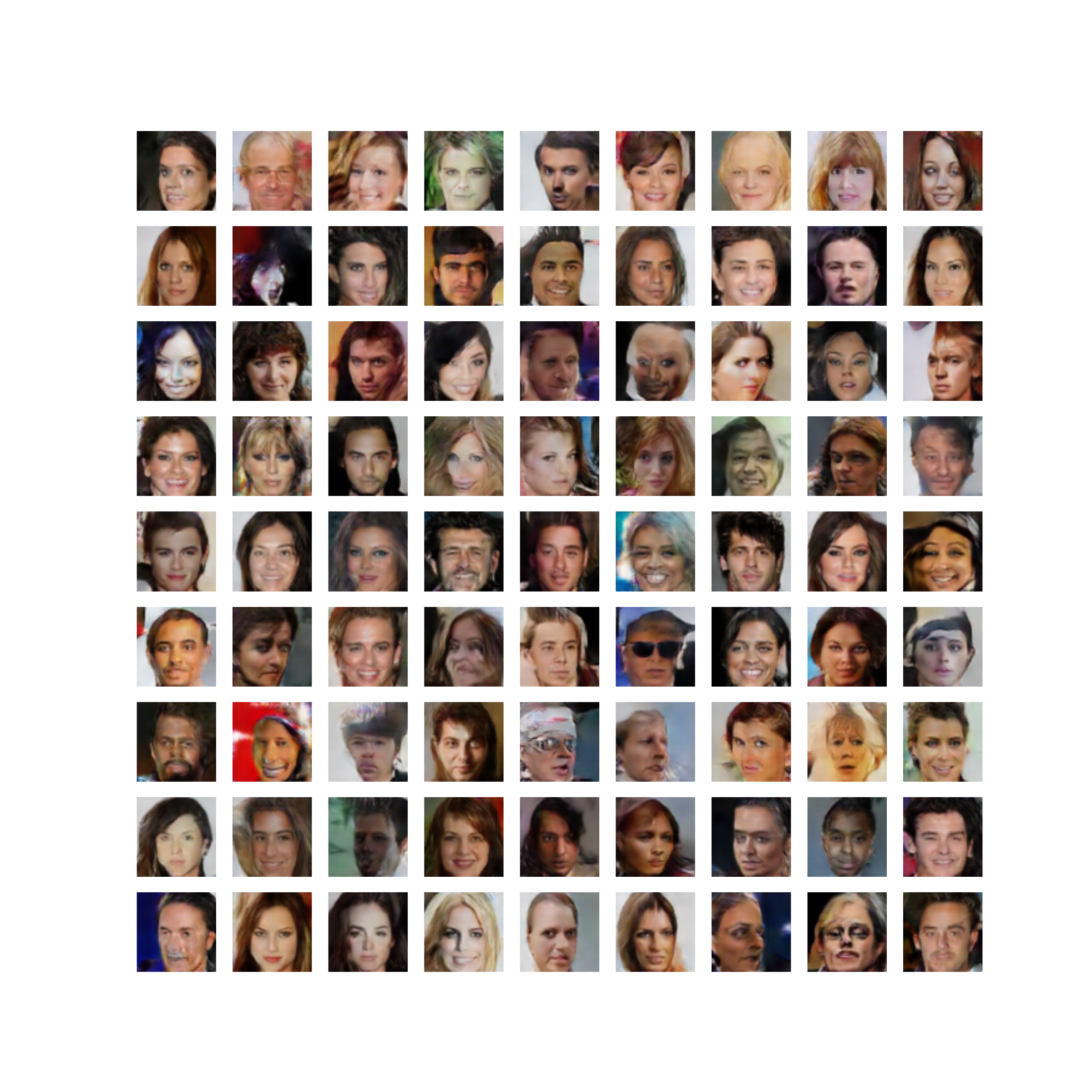}
  }
  \vspace{-2mm}
  \caption{Images generated after training on MNIST (top row) and CelebA (bottom row). For each model, the images are associated with the lowest FID obtained over 5 runs.}
  \label{fig:gen_images}
  \vspace{-6mm}
\end{figure}

To investigate the consequences of (i) regularizing the discriminator, and (ii) replacing the Monte Carlo SW with our approximation, we design another model, called \emph{regularized SWG} (reg-SWG): similarly to SWG, the generator minimizes $\swassersteinD[2, 10^4]$, but the discriminator's objective is regularized as in \eqref{eq:reg1}, \eqref{eq:reg2}. We then compare reg-det-SWG against SWG and reg-SWG, by training the models on MNIST and CelebA and measuring their respective training time and Fr\'echet Inception Distances (FID, \cite{heusel2017}): see \Cref{tab:gen_results}. We used the same network architectures for all methods, and tuned $(\lambda_1, \lambda_2)$ via cross-validation: more details on the experimental setup are given in \Cref{sec:supp_genexp}. First, we observe that the regularized models produce images of higher quality, since reg-SWG and reg-det-SWG return lower FID values than SWG. The FID of reg-SWG and reg-det-SWG are close for both datasets, thus the two models seem to yield similar performances. Hence, we report in \Cref{fig:gen_images} the images generated by SWG and reg-det-SWG only. 

The training process is more expensive when regularizing the discriminator: the average running time per epoch is higher for the regularized models. We also observe that reg-det-SWG is faster than reg-SWG, which is consistent with the fact that our approximation method is faster than Monte Carlo on high-dimensional settings. To further illustrate this point, we reported the average time spent in computing the generative loss per epoch, \ie~$\swassersteinD[2, 10^4]$ for SWG and reg-SWG, and $\widehat{\mathbf{SW}}_2$ for reg-det-SWG: see column $T_{\text{SW}}$ in \Cref{tab:gen_results}. On GPU, reg-det-SWG is at least 15 times faster than SWG and reg-SWG on MNIST, and 6 times faster on CelebA. Note that the models were trained using PyTorch, thus Monte Carlo SW benefits from a GPU-accelerated implementation of the sorting operation (with the function \texttt{torch.sort}). We also reported the computation times when models are trained on CPU. In this case, computing $\widehat{\mathbf{SW}}_2$ takes at most less than 3s per epoch, whereas the Monte Carlo estimation executes in several minutes (e.g., approximately 45min on CelebA). As a result, the total training time is almost the same for reg-det-SWG and SWG on CelebA, and the lowest for reg-det-SWG on MNIST. %

\section{Conclusion} \label{sec:ccl}

In this work, we presented a novel method to approximate the Sliced-Wasserstein distance of order 2, which relies on the concentration-of-measure phenomenon for random projections. %
The resulting method computes SW with simple deterministic operations, which are computationally efficient even on high-dimensional settings and do not require any hyperparameters. We proved nonasymptotical guarantees showing that, under a weak dependence condition, the approximation error goes to zero as the dimension increases. Our theoretical findings are then illustrated with experiments on synthetic datasets. Motivated by the computational efficiency and accuracy of our approximate SW, we finally designed a novel approach for image generation that leverages our theoretical insights. As compared to generative models based on SW estimated with Monte Carlo, our framework produces images of higher quality with further computational benefits. This encourages the use of our approximate SW on other algorithms that rely on Monte Carlo SW, e.g. autoencoders \cite{kolouri2018swae} or normalizing flows \cite{dai2021sliced}.

The weak dependence condition can be inappropriate to describe the underlying geometry of real data in ML applications, and in that case, approximating SW with our method seems inadequate. To overcome this problem, we encourage practitioners to resort to models where real data are represented by features that can be made weakly dependent. This strategy has proven successful in our image generation experiment: the reg-det-SWG model uses our approximation to compare two sets of features (instead of the raw images) whose covariance matrices are regularized to enforce weak dependence. Since many ML techniques make use of features and regularizers, we believe that our methodology is not restrictive and can then be applied to other standard problems than image generation. Besides, our weak dependence condition in \Cref{def:weak_dep} is weaker than the one in \citep{doukhan2007}, which is a notion commonly used in statistics. 

Our empirical results on synthetic data show that the approximation error goes to zero with a faster convergence rate than the one we proved. Then, the main current limitation of our framework is that our theoretical convergence rate in $d^{-1/8}$ might be slower than necessary. We proved that the overall approximation error is upper-bounded by a term in $d^{-1/2}$ when comparing Gaussians with diagonal covariance matrices, and the improvement of our error bounds for other specific distributions is left for future work. On the other hand, the extension of our methodology to variants of SW is another challenging future research direction. To the best of our knowledge, the literature on the concentration of measure phenomenon focuses on linear random projections, therefore the derivation of deterministic approximations for SW based on nonlinear projections seems highly nontrivial. A more promising direction would be to generalize our approach to SW based on $k$-dimensional linear projection by leveraging the bound in \citep[Theorem 1]{reeves2017} for $k > 1$.

Since this paper is focused on developing a theoretically-grounded novel method to estimate a distance between probability distributions, we believe it will not pose any negative societal or ethical consequence. On the other hand, as demonstrated in \Cref{sec:exp}, our contribution provides tools to speed up existing machine learning algorithms on CPU, which is useful when powerful hardware resources are not available, or when their use is deliberately avoided for environmental purposes.

\begin{ack}
  This work is partly supported by the industrial chair ``Data Science \& Artificial Intelligence for Digitalized Industry \& Services'' from T\'{e}l\'{e}com Paris. Umut \c{S}im\c{s}ekli's research is supported by the French government under management of Agence Nationale de la Recherche as part of the ``Investissements d'avenir'' program, reference ANR-19-P3IA-0001 (PRAIRIE 3IA Institute). Pierre E. Jacob gratefully acknowledges support by the National Science Foundation through grant DMS-1844695. Alain Durmus acknowledges support of the Lagrange Mathematical and Computing Research Center. Kimia Nadjahi is grateful to Pierre Colombo for his helpful advice on how to train the generative models in \Cref{sec:exp} on T\'{e}l\'{e}com Paris's GPUs.
\end{ack}

\bibliographystyle{unsrtnat}
{\small
\bibliography{main}
}

\newpage
\appendix

\setcounter{equation}{0}
\setcounter{figure}{0}
\setcounter{table}{0}
\setcounter{definition}{0}
\setcounter{theorem}{0}
\setcounter{lemma}{0}
\setcounter{proposition}{0}
\setcounter{corollary}{0}
\renewcommand{\theequation}{S\arabic{equation}}
\renewcommand{\thefigure}{S\arabic{figure}}
\renewcommand{\thetable}{S\arabic{table}}
\renewcommand{\thedefinition}{S\arabic{definition}}
\renewcommand{\thetheorem}{S\arabic{theorem}}
\renewcommand{\thelemma}{S\arabic{lemma}}
\renewcommand{\theproposition}{S\arabic{proposition}}
\renewcommand{\thecorollary}{S\arabic{corollary}}

\section{Conditional Central Limit Theorem for Gaussian Projections}

We give the formal statement of the result presented in \Cref{subsec:clt}, corresponding to \cite[Theorem 1]{reeves2017} for the special case of one-dimensional projections.

\begin{theorem}[\protect{\cite[Theorem 1]{reeves2017}}] \label{thm:reeves}
There exists a constant $C$ such that for any $\mu \in \calP_2(\Rd)$, 
  \begin{equation}
    \int_{\rset^d}  \wassersteinD[2]^2\left(\thsss \mu, \lawGauss\big(0, d^{-1}\moment_2(\mu)\big) \right) \rmd \gammabf_d(\theta)  \leq C d^{-1} \{\alpha(\mu) + \big( \moment_2(\mu) \beta_1(\mu) \big)^{1/2} + \moment_2(\mu)^{1/5} \beta_2(\mu)^{4/5}\}\eqsp,
  \end{equation}
  where
  \begin{align}
  \moment_2(\mu) &= \int_{\Rd} \norm{x}^2 \rmd \mu(x) \eqsp, \,\,\, \alpha(\mu) = \int_{\Rd} \big| \norm{x}^2 - \moment_2(\mu) \big| \rmd \mu(x) \eqsp, \\
  \beta_q(\mu) &= \left( \int_{\Rd \times \Rd} \abs{\ps{x}{x'}}^q \rmd (\mu \otimes \mu)(x, x') \right)^{\frac1q}  \eqsp,
  \end{align}
  with $q \in \{1,2\}$.
\end{theorem}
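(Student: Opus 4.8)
The plan is to prove this as a quantitative conditional central limit theorem, exploiting the Gaussianity of the projection direction $\theta$ in an essential way. Write $X\sim\mu$, set $m=\moment_2(\mu)$, and let $P_\theta=\thss_\sharp\mu$ denote the conditional law of the scalar projection $\ps{\theta}{X}$ given $\theta$. Since a nonzero mean of $\mu$ only translates $P_\theta$, I would first reduce to the centered case, the location shift being absorbed into a lower-order term. The single structural fact that drives everything is that, for Gaussian $\theta$ and a \emph{fixed} realization of $X$, the scalar $\ps{\theta}{X}$ is \emph{exactly} a centered Gaussian whose variance $v(X)$ is proportional to $\norm{X}^2$. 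Consequently the law of $\ps{\theta}{X}$ averaged over both $\theta$ and $X$ is an exact scale mixture of centered Gaussians, $\bar P=\int \lawGauss(0,v(x))\,\rmd\mu(x)$ with $v(x)\propto\norm{x}^2$ and $\int v(x)\,\rmd\mu(x)=m$.

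I would then split the target with the triangle inequality for $\wassersteinD[2]$ into an \emph{annealed} part and a \emph{fluctuation} part, square, and integrate over $\theta$:
\begin{equation}
\int \wassersteinD[2]^2(P_\theta, \lawGauss(0,m))\,\rmd\gammabf_d(\theta) \le 2\,\bbE_\theta\big[\wassersteinD[2]^2(P_\theta,\bar P)\big] + 2\,\wassersteinD[2]^2(\bar P, \lawGauss(0,m)).
\end{equation}
The annealed term compares a scale mixture of Gaussians to a single Gaussian with the averaged variance. By joint convexity of $\wassersteinD[2]^2$ and the elementary bound $(\sqrt{a}-\sqrt{b})^2\le\abs{a-b}$, one gets $\wassersteinD[2]^2(\bar P,\lawGauss(0,m))\le\int\abs{v(x)-m}\,\rmd\mu(x)$, which is proportional to the dispersion $\alpha(\mu)=\int\abs{\norm{x}^2-\moment_2(\mu)}\,\rmd\mu(x)$. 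Since $v\propto\norm{\cdot}^2$ with proportionality constant $c\propto 1/d$, this produces exactly the $d^{-1}\alpha(\mu)$ contribution.

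The fluctuation term $\bbE_\theta[\wassersteinD[2]^2(P_\theta,\bar P)]$ is the heart of the argument and the main obstacle. The clean way in is through the conditional characteristic function $\phi_\theta(t)=\bbE_X[\rme^{\rmi t\ps{\theta}{X}}]$: writing $\bar\phi=\bbE_\theta\phi_\theta$, a direct Gaussian computation with an independent copy $X'$ of $X$ gives
\begin{equation}
\bbE_\theta\big[\abs{\phi_\theta(t)-\bar\phi(t)}^2\big] = \bbE_{X,X'}\Big[\rme^{-\frac{c t^2}{2}(\norm{X}^2+\norm{X'}^2)}\big(\rme^{c t^2\ps{X}{X'}}-1\big)\Big],
\end{equation}
so the $\theta$-fluctuation of the characteristic function is governed \emph{exactly} by the inner product $\ps{X}{X'}$ between independent copies, with scale $c\propto1/d$. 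Expanding $\rme^{ct^2\ps{X}{X'}}-1$ to first and second order brings in $\bbE[\abs{\ps{X}{X'}}]=\beta_1(\mu)$ and $\bbE[\ps{X}{X'}^2]=\beta_2(\mu)^2$, which is precisely where the two remaining terms originate. To convert this $L^2$ control into a $\wassersteinD[2]$ bound I would use Gaussian smoothing: writing $P^s=P\ast\lawGauss(0,s)$, the triangle inequality gives $\wassersteinD[2](P_\theta,\bar P)\le 2\sqrt{s}+\wassersteinD[2](P_\theta^s,\bar P^s)$; I would then bound the smoothed distance by a (Gaussian-weighted) $L^2$ distance of the now-smooth densities, pass to Fourier by Plancherel to invoke the identity above integrated against the smoothing kernel, and finally optimize over the scale $s$. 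Balancing $\sqrt{s}$ against the frequency integrals of the $\beta_1$- and $\beta_2$-pieces is what generates the fractional exponents in $(\moment_2(\mu)\beta_1(\mu))^{1/2}$ and $\moment_2(\mu)^{1/5}\beta_2(\mu)^{4/5}$, the factor $\moment_2(\mu)=m$ entering through the log-Sobolev/transport relation for the reference Gaussian $\lawGauss(0,m)$, while the common $d^{-1}$ prefactor comes from $c\propto1/d$.

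Collecting the annealed and fluctuation estimates and re-adding the mean term yields the stated bound. The delicate points I anticipate are twofold. First, controlling $\rme^{ct^2\ps{X}{X'}}-1$ uniformly in $t$, since $\ps{X}{X'}$ need not be bounded and the naive Taylor bound degrades at large frequencies; this requires truncating the frequency integral or splitting on the magnitude of the inner product, which is exactly what forces the appearance of both $\beta_1$ and $\beta_2$ rather than a single moment. Second, the optimization over the smoothing scale $s$ must be carried out separately for the two contributions to recover the precise exponents $1/2$ and $4/5$ together with the correct power of $d$. These two steps are where the scaling is genuinely pinned down, and they constitute the technical core of the proof.
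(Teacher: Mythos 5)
The paper itself contains no proof of this statement: \Cref{thm:reeves} is quoted verbatim from \cite[Theorem 1]{reeves2017} ``for completeness,'' so your attempt can only be measured against the proof in the cited source. Measured that way, your sketch reconstructs Reeves' actual argument quite faithfully: conditionally on $X=x$, the projection $\ps{\theta}{x}$ is exactly $\lawGauss(0,\norm{x}^2/d)$, so the $\theta$-averaged law is an exact scale mixture of centered Gaussians; the triangle inequality splits the error into a mixture-versus-Gaussian term, which joint convexity of $\wassersteinD[2]^2$ together with $(\sqrt{a}-\sqrt{b})^2\le\abs{a-b}$ bounds by the $d^{-1}\alpha(\mu)$ contribution, and a $\theta$-fluctuation term controlled through the exact identity $\int_{\Rd}\absLigne{\phi_\theta(t)-\bar\phi(t)}^2\,\rmd\gammabf_d(\theta)=\E\big[\rme^{-t^2(\norm{X}^2+\norm{X'}^2)/(2d)}\big(\rme^{t^2\ps{X}{X'}/d}-1\big)\big]$, whose first- and second-order expansions are precisely where $\beta_1(\mu)$ and $\beta_2(\mu)$ enter; Gaussian smoothing plus a Fourier-analytic conversion to $\wassersteinD[2]$, with optimization over the smoothing scale, then yields the exponents $1/2$ and $4/5$. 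This is the same decomposition and the same key computation as in the cited proof.

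Two points in your sketch need repair, though neither breaks the architecture. First, drop the opening reduction to the centered case: it is unnecessary (the conditional law of $\ps{\theta}{X}$ given $X=x$ is $\lawGauss(0,\norm{x}^2/d)$ regardless of the mean of $\mu$, so the mixture representation and all subsequent steps hold for general $\mu$), and your justification is misleading --- the mean is not a ``lower-order term'' that gets absorbed; it inflates $\moment_2(\mu)$ and $\beta_2(\mu)$ at leading order, which is exactly why the main text shows in \eqref{eq:gamma_center}--\eqref{eq:beta_center} that the bound cannot vanish for non-centered data. The theorem is stated for arbitrary $\mu$ with the mean's effect already carried inside $\alpha$, $\moment_2$, $\beta_q$, so the correct move is to prove it directly, which the rest of your argument in fact does. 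Second, the passage from smoothed $L^2$ control of characteristic functions to $\wassersteinD[2]$ is not mere Plancherel: one needs a quantitative inequality bounding $\wassersteinD[2]$ by a weighted characteristic-function distance plus second-moment tails (a Bobkov-type Fourier--Wasserstein bound, which is how $\moment_2(\mu)$ enters); you gesture at this correctly, but it must be invoked as a lemma rather than rederived loosely, since it is the genuinely technical step. Finally, a reading note on normalization: with $\theta\sim\gammabf_d$ the mixture's average variance is $\moment_2(\mu)/d$, so the variance of the approximating Gaussian in the statement should be understood per-dimension-normalized, as in \cite{reeves2017}; your own computation $v(x)=\norm{x}^2/d$ produces the correctly scaled object, and the $d^{-1}$ prefactor of the bound is consistent with folding that normalization out of $\alpha$, $\moment_2$, $\beta_q$.
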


\section{Postponed proofs for \Cref{sec:theory}}

\subsection{Proof of \Cref{prop:gaussian_sw}}

\begin{proof}[Proof of \Cref{prop:gaussian_sw}]
Let $\theta \in \Rd$ and write $\theta = r\bth$, $r \geq 0$ and $\bth \in \sphereD$. Then, we get
\begin{align}
    \wassersteinD[p]^p(\thsss \mu, \thsss \nu) &= \wassersteinD[p]^p\big( (r\bar{\theta})_\sharp^\star \mu, (r\bar{\theta})_\sharp^\star \nu \big) \\
    &= \int_{0}^1 \big| F_{(r\bar{\theta})_\sharp^\star \mu}^{\leftarrow}(t) - F_{(r\bar{\theta})_\sharp^\star \nu}^{\leftarrow}(t) \big|^p \rmd t \eqsp, \label{eq:wp1d_change_1}
\end{align}
where \eqref{eq:wp1d_change_1} results from \eqref{eq:wass_1d}: $F_{\tilde{\mu}}$ and $F^{\leftarrow}_{\tilde{\mu}}$ denote the cumulative distribution and quantile function respectively, of a one-dimensional probability measure $\tilde{\mu}$, \ie~$F_{\tilde{\mu}}(s)= \tilde{\mu}(\ocint{-\infty,s})$ and $F^{\leftarrow}_{\tilde{\mu}}(t) = \inf\{s' \in \rset \, : \, F_{\tilde{\mu}}(s') \geq t\}$ for $s \in \rset$ and $t \in \ccint{0,1}$.
For any  $r >0$ and $\theta \in \sphere^{d-1}$, we get
\begin{align}
  F_{(r\bar{\theta})_\sharp^\star \mu}(s) &= \big((r\bar{\theta})_\sharp^\star \mu\big)\{ (-\infty, s] \} \\
  &= \big(\bar{\theta}_\sharp^\star \mu\big)\{ (-\infty, s/r] \} = F_{\bar{\theta}_\sharp^\star \mu}(s/r) \eqsp, \label{eq:qtl3}
\end{align}
which easily implies that $F_{(r\bar{\theta})_\sharp^\star \mu}^{\leftarrow}(t)= r F_{\bar{\theta}_\sharp^\star \mu}^{\leftarrow}(t)$. Therefore,
using this property in \eqref{eq:wp1d_change_1}, we obtain,
\begin{align}
    \wassersteinD[p]^p(\thsss \mu, \thsss \nu) &= \int_{0}^1 \big| r F_{\bar{\theta}_\sharp^\star \mu}^{\leftarrow}(t) - r F_{\bar{\theta}_\sharp^\star \nu}^{\leftarrow}(t) \big|^p \rmd t \label{eq:wp1d_change_2} \\
    &= r^p\,\wassersteinD[p]^p(\bar{\theta}_\sharp^\star \mu, \bar{\theta}_\sharp^\star \nu) \label{eq:wp1d_change_3} \eqsp.
\end{align}

By applying a $d$-spherical change of variables in the definition of $\tswassersteinD[p]$ \eqref{eq:gaussian_sw} and plugging \eqref{eq:wp1d_change_3},
\begin{align}
    \tswassersteinD[p]^{p}(\mu, \nu) &= \int_{\rset_+} \int_{\sphereD} r^p \wassersteinD[p]^p\big(\bar{\theta}_\sharp^\star \mu, \bar{\theta}_\sharp^\star \nu\big)~(2\pi)^{-\frac{d}{2}}d^{\frac{d}{2}} e^{-\frac{d}{2} \|r \bar{\theta} \|^2} r^{d-1} \rmd \bar{\theta} \rmd r \\
    &= (2\pi)^{-\frac{d}{2}}d^{\frac{d}{2}} \int_{\rset_+} r^{p+d-1} e^{-\frac{d}{2}r^2} \left( \int_{\sphereD} \wassersteinD[p]^p\big(\bar{\theta}_\sharp^\star \mu, \bar{\theta}_\sharp^\star \nu\big) \rmd \bar{\theta} \right) \rmd r \eqsp.
\end{align}

Since the surface area of $\sphereD$ is equal to $2\pi^{\frac{d}{2}}\Gamma(d/2)^{-1}$ \cite{huber1982}, and by definition of SW \eqref{eq:def_sw}, $\int_{\sphereD} \wassersteinD[p]^p\big(\bar{\theta}_\sharp^\star \mu, \bar{\theta}_\sharp^\star \nu\big) \rmd \bar{\theta} = 2\pi^{\frac{d}{2}}\Gamma(d/2)^{-1} \swassersteinD[p]^p(\mu, \nu)$. 

Besides, by applying the change of variables $t = (d/2)^{1/2}r$,
\begin{equation*}
    \int_{\rset_+} r^{p+d-1} e^{-\frac{d}{2}r^2} \rmd r = 2^{(p+d)/2} d^{-(p+d)/2} \int_{\rset_+} t^{p+d-1} e^{-t^2} \rmd t = 2^{(p+d)/2-1} d^{-(p+d)/2}~\Gamma\big( (d+p)/2 \big)
\end{equation*} 

We finally obtain,
\begin{align}
    \tswassersteinD[p]^{p}(\mu, \nu) &= (2/d)^{p/2}~\frac{\Gamma\big( d/2 + p/2 \big)}{\Gamma(d/2)}~\swassersteinD[p]^p(\mu, \nu) \eqsp.
\end{align}

\end{proof}

\subsection{Proof of \Cref{thm:sw_cvg}}

\begin{proof}[Proof of \Cref{thm:sw_cvg}]
    By the triangle inequality, for any $\theta \in \Rd$,
    \begin{align}
        &\abs{\wassersteinD[2](\thsss \mu_d, \thsss \nu_d) -
          \wassersteinD[2]\{\lawGauss(0,d^{-1}\moment_2(\mu_d)), \lawGauss(0,d^{-1}\moment_2(\nu_d))\}} \\
        &\leq \wassersteinD[2]\{\thsss \mu_d, \lawGauss(0,d^{-1}\moment_2(\mu_d))\} + \wassersteinD[2]\{\thsss \nu_d, \lawGauss(0,d^{-1}\moment_2(\nu_d))\}
    \end{align}

    Therefore, taking the integral with respect to $\gammabf_d$,
    \begin{align}
        &\int_{\Rd} \Big( \wassersteinD[2](\thsss \mu_d, \thsss \nu_d) - \wassersteinD[2]\{\lawGauss(0,d^{-1}\moment_2(\mu_d)), \lawGauss(0,d^{-1}\moment_2(\nu_d))\} \Big)^2 \rmd\gammabf_d(\theta) \\
        &\leq \int_{\Rd} \Big( \wassersteinD[2]\{\thsss \mu_d, \lawGauss(0,d^{-1}\moment_2(\mu_d))\} + \wassersteinD[2]\{\thsss \nu_d, \lawGauss(0,d^{-1}\moment_2(\nu_d))\} \Big)^2 \rmd\gammabf_d(\theta) \\
        &\leq 2\left\{ \int_{\Rd} \wassersteinD[2]^2\{\thsss \mu_d, \lawGauss(0,d^{-1}\moment_2(\mu_d))\} \rmd\gammabf_d(\theta) + \int_{\Rd} \wassersteinD[2]^2\{\thsss \nu_d, \lawGauss(0,d^{-1}\moment_2(\nu_d))\} \rmd\gammabf_d(\theta) \right\} \eqsp, \label{eq:cvg_mean_square_1}
    \end{align}

    where \eqref{eq:cvg_mean_square_1} follows from $(a+b)^2 \leq 2(a^2 + b^2)$. Then, we apply \Cref{thm:reeves} to bound \eqref{eq:cvg_mean_square_1}, and we conclude there exists a universal constant $C > 0$ such that
    \begin{align}
      &\int_{\Rd} \Big( \wassersteinD[2](\thsss \mu_d, \thsss \nu_d) - \wassersteinD[2]\{\lawGauss(0,d^{-1}\moment_2(\mu_d)), \lawGauss(0,d^{-1}\moment_2(\nu_d))\} \Big)^2 \rmd\gammabf_d(\theta) \\
      &\leq C \big( \constmud_d(\mu_d) + \constmud_d(\nu_d) \big) \label{eq:cvg_mean_square_2}
    \end{align}

    Using $\abs{\| a \| - \| b \|} \leq \| a-b \|$ in $\mathrm{L}^2(\gammabf_d)$ gives
    \begin{align}
      &\Big| \Big\{ \int_{\Rd} \wassersteinD[2]^2(\thsss \mu_d, \thsss \nu_d) \rmd\gammabf_d(\theta)\Big\}^{1/2} - \Big\{ \int_{\Rd} \wassersteinD[2]^2\{\lawGauss(0,d^{-1}\moment_2(\mu_d)), \lawGauss(0,d^{-1}\moment_2(\nu_d))\} \rmd\gammabf_d(\theta)\Big\}^{1/2} \Big| \label{eq:proof_sw_cvg} \\
      &\leq \Big\{ \int_{\Rd} \Big( \wassersteinD[2](\thsss \mu_d, \thsss \nu_d) - \wassersteinD[2]\{\lawGauss(0,d^{-1}\moment_2(\mu_d)), \lawGauss(0,d^{-1}\moment_2(\nu_d))\} \Big)^2 \rmd\gammabf_d(\theta) \Big\}^{1/2} \\
      &\leq C^{1/2} \big( \constmud_d(\mu_d) + \constmud_d(\nu_d) \big)^{1/2}
    \end{align}
    By \eqref {eq:gaussian_sw} and \Cref{prop:gaussian_sw}, $\int_{\Rd} \wassersteinD[2]^2(\thsss \mu_d, \thsss \nu_d) \rmd\gammabf_d(\theta) = \tswassersteinD[2]^2(\mu, \nu) = \swassersteinD[2]^2(\mu, \nu)$. We then obtain the final result by rewritting \eqref{eq:proof_sw_cvg} as $\left| \swassersteinD[2](\mu_d, \nu_d) - \wassersteinD[2]\{\lawGauss(0,d^{-1}\moment_2(\mu_d)), \lawGauss(0,d^{-1}\moment_2(\nu_d))\} \right|$.

\end{proof}

\subsection{Proof of \Cref{prop:sw_translation}}

\begin{proof}[Proof of \Cref{prop:sw_translation}]
  This result follows from an analogous translation property of the Wasserstein distance: by \cite[Remark 2.19]{peyre2019}, $\wassersteinD[2]$ \eqref{eq:def_wass} can factor out translations; in particular, for any $\xi, \xi' \in \calP_2(\Rd)$ with respective means $\bfm_{\xi}, \bfm_{\xi'}$ and centered versions $\bxi, \bxi'$,
  \begin{equation}
    \wassersteinD[2]^2(\xi, \xi') = \wassersteinD[2]^2(\bxi, \bxi') + \| \bfm_{\xi} - \bfm_{\xi'} \|^2 \eqsp. \label{eq:wass_translate}
  \end{equation}

By using \eqref{eq:wass_translate} in the definition of SW of order 2 \eqref{eq:def_sw}, we obtain for any $\mu_d, \nu_d \in \calP_2(\Rd)$,
\begin{align}
  \swassersteinD[2]^2(\mu_d, \nu_d) &= \int_{\sphereD} \wassersteinD[2]^2(\thsss \bmu_d, \thsss \bnu_d) \rmd \unifS(\theta) + \int_{\sphereD} | {\bfm}_{\thsss \mu_d} - {\bfm}_{\thsss \nu_d} |^2 \rmd \unifS(\theta) \\
  &= \swassersteinD[2]^2(\bmu_d, \bnu_d) + \int_{\sphereD} | {\bfm}_{\thsss \mu_d} - {\bfm}_{\thsss \nu_d} |^2 \rmd \unifS(\theta) \eqsp. \label{eq:proof_sw_translate_1}
\end{align}

By the properties of pushforward measures, ${\bfm}_{\thsss \xi} = \ps{\theta}{{\bfm}_\xi}$ for any $\theta \in \sphereD$ and $\xi \in \calP_2(\Rd)$. The second term of \eqref{eq:proof_sw_translate_1} can thus be reformulated as
\begin{align}
  \int_{\sphereD} | {\bfm}_{\thsss \mu_d} - {\bfm}_{\thsss \nu_d} |^2 \rmd \unifS(\theta) &= \int_{\sphereD} | \ps{\theta}{{\bfm}_{\mu_d} - {\bfm}_{\nu_d}} |^2 \rmd \unifS(\theta) \\
  &= ({\bfm}_{\mu_d} - {\bfm}_{\nu_d})^\top \left( \int_{\sphereD} \theta \theta^\top \rmd \unifS(\theta) \right) ({\bfm}_{\mu_d} - {\bfm}_{\nu_d}) \\
  &= (1/d)~\|{\bfm}_{\mu_d} - {\bfm}_{\nu_d}\|^2 \eqsp, \label{eq:proof_sw_translate_2}
\end{align}
where the last equation results from $\int_{\sphereD} \theta \theta^\top \rmd
\unifS(\theta) = (1/d)\bfI_d$. The final result is obtained by incorporating
\eqref{eq:proof_sw_translate_2} in \eqref{eq:proof_sw_translate_1}. 

\end{proof}

\subsection{Error analysis under independence} \label{sec:error_id}

This section gives a detailed analysis of the error bound under the first setting discussed in \Cref{sec:weakdep}: we consider sequences of independent random variables which have zero means and finite fourth-order moments, and we derive an upper bound for $\constmud_d$ in the next proposition.

\begin{proposition} \label{prop:idcomp}
  Let $(X_j)_{j \in\nsets}$ be a sequence of independent random variables with zero means and $\PE[X_j^4] < \plusinfty$ for $j\in\nsets$. Set for any $d\in\nsets$, $X_{1:d} = \{X_j\}_{j=1}^d$ and let $\mu_d$ be the distribution of $X_{1:d}$. Then, we have 
  \begin{equation}
    \label{eq:1}
    \constmud_d(\mu_d) \leq d^{-1/2} \big\{\max_{1 \leq j \leq d} \Var[X_j^2]\big\}^{1/2} + \big\{d^{-1/4} + d^{-2/5}\big\} \max_{1 \leq j \leq d} \Var[X_j] \eqsp.
  \end{equation}
\end{proposition}

\begin{proof}[Proof of \Cref{prop:idcomp}]
Given the definition of $\constmud_d(\mu_d)$ \eqref{eq:def_constmud}, the proof consists in bounding $\moment_2(\mu_d)$, $\alpha(\mu_d)$ and $\beta_q(\mu_d)$ for $q \in \{1, 2\}$.

Since for any $j \in \{1, \dots, d\}$, $\bbE[X_j] = 0$, then $\Var\big[ X_j \big] = \bbE[X_j^2]$ and
\begin{align}
  \moment_2(\mu_d) &= \sum_{j=1}^d \bbE[X_j^2] = \sum_{j=1}^d \Var[X_j] \leq d \max_{1 \leq j \leq d} \Var[X_j] \label{eq:bound_gamma}
\end{align}

To bound $\alpha(\mu_d)$, we first use the Cauchy--Schwarz inequality.
\begin{align}
    \alpha(\mu_d) &\leq \left\{ \int_{\Rd} \big( \norm{x_{1:d}}^2 - \moment_2(\mu_d) \big)^2 \rmd\mu_d(x_{1:d})\right\}^{1/2} \label{eq:alpha_cauchysch}
\end{align}

Besides, $\int_{\Rd} \big( \norm{x_{1:d}}^2 - \moment_2(\mu_d) \big)^2 \rmd\mu_d(x_{1:d}) = \Var\big[\|X_{1:d}\|^2\big]$, and since the $d$ components of $X_{1:d}$ are assumed to be pairwise independent, $\Var\big[\|X_{1:d}\|^2\big] =
\sum_{j=1}^d \Var\big[ X_j^2\big]$. We conclude that
\begin{align}
    \alpha(\mu_d) \leq \left(\sum_{j=1}^d \Var\big[ X_j^2\big]\right)^{1/2} \leq \big(d \max_{1 \leq j \leq d} \Var[X_j^2]\big)^{1/2} \eqsp.
\end{align}

Finally, we bound $\beta_q(\mu_d)$ for $q \in \{1, 2\}$ by bounding
$\beta_2(\mu_d)$ then using the fact that $\beta_1(\mu_d) \leq \beta_2(\mu_d)$ by the Cauchy--Schwarz inequality. Denote by $X'_{1:d}$ an independent copy of $X_{1:d}$.
\begin{align}
  \ps{X_{1:d}}{X'_{1:d}}^2 = \Big(\sum_{j=1}^d{X_j}{X'_j}\Big)^2 = \sum_{j=1}^d{X_j}^2{X'_j}^2 + 2 \sum_{i < j} X_i X'_i X_j X'_j \eqsp.
\end{align}

Since $X_{1:d}$ and $X'_{1:d}$ are independent on one hand, and they both are sequences of $d$ independent random variables with zero means on the other hand, we have
\begin{align}
  &\int_{\Rd \times \Rd} (\ps{x_{1:d}}{x'_{1:d}})^2 \rmd (\mu_d \otimes \mu_d)(x_{1:d}, x'_{1:d}) \\
  &= \sum_{j=1}^d\bbE\big[{X_j}^2\big]\bbE\big[{X'_j}^2\big] = \sum_{j=1}^d \bbE\big[ X_j^2 \big]^2 = \sum_{j=1}^d \Var\big[X_j\big]^2 \eqsp.
\end{align}

Therefore, $\beta_2(\mu_d) \leq ( \sum_{j=1}^d \Var[X_j]^2
)^{1/2} \leq (d \max_{1 \leq j \leq d} \Var[X_j]^2)^{1/2}$. Since $X_{1:d}$ has finite second and fourth-order moments, $\max_{1 \leq j \leq d} \Var[X_j],\, \max_{1 \leq j \leq d} \Var[X_j^2] < \infty$, and we get 
\begin{align} \label{eq:bound_alpbetgam}
  \moment_2(\mu_d) &\leq d \max_{1 \leq j \leq d} \Var[X_j], \quad \alpha(\mu_d) \leq d^{1/2} (\max_{1 \leq j \leq d} \Var[X_j^2])^{1/2}, \\
  \beta_1(\mu_d),\, \beta_2(\mu_d) &\leq d^{1/2} \max_{1 \leq j \leq d} \Var[X_j] \eqsp.
\end{align}

The final result is obtained by bounding $\constmud(\mu_d)$ using \eqref{eq:bound_alpbetgam}.

\end{proof}

Note that the setting considered in \Cref{prop:idcomp} was mentioned in \cite{reeves2017} to illustrate the conditions of \cite[Corollary 3]{reeves2017}. We derived an explicit upper bound of $\constmud_d$ under this setting for completeness, showing that $\constmud_d(\mu_d)$ goes to zero as $d\to\infty$, which we can then use to refine the convergence rate in \Cref{thm:sw_cvg}, as we explained in \Cref{sec:weakdep}.

\subsection{Error analysis under weak dependence}

We now analyze the error under the weak dependence condition introduced in \Cref{def:weak_dep}. Specifically, the proposition below gives the formal statement of the result mentioned before \Cref{cor:weakdep}: we consider a sequence of fourth-order weakly dependent random variables, and we prove that $\constmud(\mu_d)$ goes to zero as $d\to\infty$, with a convergence rate that depends on $\{\rho(n)\}_{n\in\nsets}$.

\begin{proposition} \label{prop:weakdep}
  Let $(X_j)_{j \in\nsets}$ be a sequence of random variables which is fourth-order weakly dependent. Set for any $d\in\nsets$, $X_{1:d} = \{X_j\}_{j=1}^d$ and denote by $\mu_d$ the distribution of $X_{1:d}$. Then, there exists a universal constant $C > 0$ such that
  \begin{align}
    \label{eq:1}
    \constmud_d(\mu_d) \leq C \Big\{ &d^{-1/2} \big( \rho(0) + 2 \rho_\infty \big)^{1/2} + d^{-1/4} \rho(0)^{1/2} \big( \rho(0)^2 + 2\rho_\infty \max_{1 \leq k \leq d-1} \rho(k) \big)^{1/4} \\
    &+ d^{-2/5} \rho(0)^{1/5} \big( \rho(0)^2 + 2\rho_\infty \max_{1 \leq k \leq d-1} \rho(k)\big)^{2/5} \Big\} \eqsp.
  \end{align}
\end{proposition}

\begin{proof}[Proof of \Cref{prop:weakdep}]
  We proceed as in the proof of \Cref{prop:idcomp}, \ie~by bounding $\moment_2(\mu_d)$, $\alpha(\mu_d)$ and $\beta_2(\mu_d)$. 

  Since $(X_j)_{j \in\nsets}$ is assumed to be fourth-order weakly dependent, then by \Cref{def:weak_dep}, there exist some constant $K \geq 0$ and a nonincreasing sequence of real coefficients $\{\rho(n)\}_{n \in \bbN}$ such that, for any $1 \leq i \leq j \leq d$,\, 
  \begin{equation} \label{eq:weakdep}
    | \Cov(X_i^2, X_j^2) | \leq K \rho(j-i),\quad | \Cov(X_i, X_j) | \leq K \rho(j-i)
  \end{equation}

  First, using the same arguments as in \eqref{eq:bound_gamma}, we have $\moment_2(\mu_d) = \sum_{j=1}^d \Var[X_j]$. We then use the second inequality in \eqref{eq:weakdep} to bound $\moment_2(\mu_d)$ as follows.
  \begin{align}
    \moment_2(\mu_d) &= \sum_{j=1}^d \Cov(X_j, X_j) \leq d K \rho(0)
  \end{align}
  Regarding $\alpha(\mu_d)$, we use the Cauchy--Schwarz inequality again \eqref{eq:alpha_cauchysch} but 
  in this setting, the right-hand side features non-zero covariance terms:
  \begin{align}
    &\int_{\Rd} \big( \norm{x_{1:d}}^2 - \moment_2(\mu_d) \big)^2 \rmd \mu_d(x_{1:d}) = \Var\big[\|X_{1:d}\|^2\big] \\
    &= \sum_{j=1}^d \Var\big[X_j^2\big] + 2~\sum_{i < j} \Cov\big(X_i^2, X_j^2\big) \eqsp. \label{eq:proof_weakdep_1}
  \end{align}

  By using the first inequality in \eqref{eq:weakdep}, we get for any $d \in \nsets$,
  \begin{align}
    \sum_{j=1}^d \Var\big[X_j^2\big] &= \sum_{j=1}^d \Cov(X_j^2, X_j^2) \leq K d \rho(0) \eqsp, \label{eq:proof_weakdep_3} \\
    \sum_{i < j} \Cov\big(X_i^2, X_j^2\big) &\leq \sum_{i < j} | \Cov\big(X_i^2, X_j^2\big) | \leq K \sum_{i < j} \rho(j - i) \\
    &\leq K \sum_{n=1}^{d-1} (d-n) \rho(n) \label{eq:proof_weakdep_2} \\
    &\leq K d \sum_{n=1}^{d-1} \rho(n) \leq K d \rho_\infty \label{eq:proof_weakdep_6}
  \end{align}
  where \eqref{eq:proof_weakdep_2} results from the change of variable $n = j - i$. Besides, by \Cref{def:weak_dep}, $\{\rho(n)\}_{n \in \bbN}$ is a nonincreasing sequence satisfying $\sum_{n=0}^{+\infty} \rho(n) \leq \rho_{\infty} < \plusinfty$, hence \eqref{eq:proof_weakdep_6}.
  We conclude that for any $d \in \nsets$,
\begin{equation}
  \alpha(\mu_d) \leq d^{1/2} K^{1/2} \big(\rho(0) + 2 \rho_\infty \big)^{1/2}
\end{equation}

Let us now bound $\beta_2(\mu_d)$. First, for any $d \in \nsets$,
\begin{align}
  &\int_{\Rd \times \Rd} (\ps{x_{1:d}}{x'_{1:d}})^2 \rmd (\mu_d \otimes \mu_d)(x_{1:d}, x'_{1:d}) \\
  &= \sum_{j=1}^d \bbE\big[ {X_j}^2 \big] \bbE\big[ {X'_j}^2 \big] + 2 \sum_{i < j} \bbE\big[ X_i X_j\big] \bbE\big[ X'_i X'_j\big] \\
  &= \sum_{j=1}^d \bbE\big[ {X_j}^2 \big]^2 + 2 \sum_{i < j} \bbE\big[ X_i X_j\big]^2 \\
  &= \sum_{j=1}^d \Var\big[ X_j \big]^2 + 2 \sum_{i < j} \Cov(X_i, X_j)^2 \eqsp, \label{eq:proof_weakdep_5}
\end{align}

where we used $\PE[X_i] = 0$ for any $i \geq 1$. To bound \eqref{eq:proof_weakdep_5}, we apply the second inequality in \eqref{eq:weakdep}, and adapt the arguments used to prove \eqref{eq:proof_weakdep_3} and \eqref{eq:proof_weakdep_2}, .
\begin{align}
  \sum_{j=1}^d \Var\big[ X_j \big]^2 &\leq K^2 d \rho(0)^2 \label{eq:proof_weakdep_7} \\
  \sum_{i < j} \Cov(X_i, X_j)^2 &\leq K^2 d \sum_{n=1}^{d-1} \rho(n)^2 \leq K^2 d \rho_\infty \max_{1 \leq n \leq d-1} \rho(n) \label{eq:proof_weakdep_8}
\end{align}

Since $\sum_{n=0}^{+\infty} \rho(n) \leq \rho_{\infty} < \infty$, $\{\rho(n)\}_{n \in \bbN}$ converges to 0 and is thus bounded, so $\max_{1 \leq n \leq d-1} \rho(n) < \infty$. %
We then use \eqref{eq:proof_weakdep_7} and \eqref{eq:proof_weakdep_8} in the definition of $\beta_2(\mu_d)$, and $\beta_1(\mu_d) \leq \beta_2(\mu_d)$, to derive the upper-bound below for any $d \in \nsets$.
\begin{align}
  \beta_1(\mu_d), \beta_2(\mu_d) &\leq d^{1/2} K \big\{ \rho(0)^2 + 2 \rho_\infty \max_{1 \leq n \leq d-1} \rho(n) \big\}^{1/2}
\end{align}

\end{proof}

\section{Setup for synthetic experiments} \label{sec:synth}

We explain in more details the setup for the synthetic experiments discussed in \Cref{sec:exp}, specifically the procedure to generate data.

For $d\in\nsets$, we generate $n = 10^4$ \iid~realizations of two random variables in $\Rd$, denoted by $X_{1:d} = \{X_j\}_{j=1}^d$ and $Y_{1:d} = \{Y_j\}_{j=1}^d$ and respectively distributed from $\mu_d,\, \nu_d \in \calP_2(\Rd)$. The $n$ generated samples of $X_{1:d}$ and $Y_{1:d}$ are respectively denoted by $\{ x^{(j)} \}_{j=1}^n,\, \{ y^{(j)} \}_{j=1}^n \in \rset^{dn}$. We approximate SW of order 2 between the empirical distributions of $\{ x^{(j)} \}_{j=1}^n$ and $\{ y^{(j)} \}_{j=1}^n$, given by $\hmu_{d, n} =  n^{-1}~\sum_{j=1}^n \updelta_{x^{(j)}}$ and $\hnu_{d, n} =  n^{-1}~\sum_{j=1}^n \updelta_{y^{(j)}}$ respectively. Note that in the main text (\Cref{sec:exp}), these two distributions were denoted by $\mu_d,\, \nu_d$ instead of $\hmu_{d, n},\, \hnu_{d, n}$, to simplify the notation. 

\subsection{Independent random variables} \label{sec:indep_gen}

We first consider the setting described in \Cref{sec:error_id}, where $\mu_d = \mu^{(1)} \otimes \cdots \otimes \mu^{(d)}$ and $\nu_d = \nu^{(1)} \otimes \cdots \otimes \nu^{(d)}$ with $\mu^{(j)}, \nu^{(j)} \in \calP_4(\rset)$ for $j \in \{1, \dots, d\}$. This means that $\{X_j\}_{j=1}^d$ and $\{Y_j\}_{j=1}^d$ are two sequences of $d$ independent random variables. For each $j \in \{1, \dots, d\}$, $\mu^{(j)}$ (or $\nu^{(j)}$) refers to a Gaussian or a Gamma distribution, centered or not, as we explain hereafter.

\paragraph{Gaussian distributions (\Cref{fig:noncentered}).} For $j \in \{1, \dots, d\}$, $\mu^{(j)} = \lawGauss(m_1^{(j)}, \sigma_1^2)$ and $\nu^{(j)} = \lawGauss(m_2^{(j)}, \sigma_2^2)$, where $m_1^{(j)},\, m_2^{(j)}$ are two \iid~samples from $\lawGauss(1, 1)$, $\sigma_1^2 = 1$ and $\sigma_2^2 = 10$. Therefore, $\mu_d = \lawGauss(\bfm_1, \bfI_d)$ and $\nu_d = \lawGauss(\bfm_2, 10\,\bfI_d)$, where $\bfI_d$ denotes the identity matrix of size $d$, and $\bfm_1 = \{ m_1^{(j)} \}_{j=1}^d,\, \bfm_2 = \{ m_2^{(j)} \}_{j=1}^d \in \Rd$.

We prove that the SW of order 2 between such Gaussian distributions admits a closed-form expression: for any $\bfm_1, \bfm_2 \in \Rd$ and $\sigma_1^2, \sigma_2^2 > 0$,
\begin{equation} \label{eq:exact_sw_gauss}
  \swassersteinD[2]^2\{ \lawGauss({\bfm}_1, \sigma_1^2~{\bfI_d}), \lawGauss({\bfm}_2, \sigma_2^2~{\bfI_d})\} = \frac1d \| \bfm_1 - \bfm_2 \|^2 + (\sigma_1 - \sigma_2)^2
\end{equation}

\begin{proof}
  First, given the properties of affine transformations of Gaussian random variables, we know that for any $\theta \in \sphereD$, $\bfm \in \Rd$ and $\Sigma \in \rset^{d\times d}$ symmetric positive-definite, $\thsss \lawGauss(\bfm, \Sigma)$ is the univariate Gaussian distribution $\lawGauss(\ps{\theta}{\bfm}, \theta^\intercal \Sigma \theta)$. 

  Using this property in the definition of SW \eqref{eq:def_sw} and the fact that $\|\theta\| = 1$ for $\ths\in\sphereD$, 
  \begin{align}
  &\swassersteinD[2]^2\{ \lawGauss({\bfm}_1, \sigma_1^2~{\bfI_d}), \lawGauss({\bfm}_2, \sigma_2^2~{\bfI_d})\} \nonumber \\
  &= \int_{\sphereD} \wassersteinD[2]^2\{\lawGauss(\ps{\theta}{\bfm_1}, \sigma_1^2), \lawGauss(\ps{\theta}{\bfm_2}, \sigma_2^2)\} \rmd\unifS(\ths) \\
  &= \int_{\sphereD} \big\{ \ps{\theta}{\bfm_1 - \bfm_2}^2 + (\sigma_1 - \sigma_2)^2 \big\} \rmd\unifS(\ths) \eqsp, \label{proof:sw_gauss1}
  \end{align}
  where \eqref{proof:sw_gauss1} results from the closed-form solution of the Wasserstein distance of order 2 between Gaussian distributions \eqref{eq:wass_gauss}. Besides, by definition of the Euclidean inner-product, for any $\theta \in \sphereD$,
  \begin{equation}
  \ps{\theta}{\bfm_1 - \bfm_2}^2 = \big(\theta^\intercal(\bfm_1 - \bfm_2)\big)^2 = (\bfm_1 - \bfm_2)^\intercal \theta \theta^\intercal (\bfm_1 - \bfm_2) \eqsp.
  \end{equation}
  We can thus rewrite \eqref{proof:sw_gauss1} to obtain
  \begin{align}
  &\swassersteinD[2]^2\{ \lawGauss({\bfm}_1, \sigma_1^2~{\bfI_d}), \lawGauss({\bfm}_2, \sigma_2^2~{\bfI_d})\} \nonumber \\
  &= (\bfm_1 - \bfm_2)^\intercal \Big\{ \int_{\sphereD} \theta \theta^\intercal \rmd\unifS(\ths) \Big\} (\bfm_1 - \bfm_2) + (\sigma_1 - \sigma_2)^2 \eqsp.
  \end{align}

  We conclude by using the fact that $\int_{\sphereD} \theta \theta^\intercal \rmd\unifS(\ths) = (1/d)\bfI_d$.

\end{proof}

\paragraph{Gamma distributions (\Cref{fig:noncentered}).} Denote by $\Gamma(k, s)$ the Gamma distribution with shape parameter $k > 0$ and scale $s > 0$. For $j \in \{1, \dots, d\}$, $\mu^{(j)} = \Gamma(k_1^{(j)}, s_1)$ and $\nu^{(j)} = \Gamma(k_2^{(j)}, s_2)$, where $k_1^{(j)}$ (respectively, $k_2^{(j)}$) is drawn from the uniform distribution over $[1, 5)$ (respectively, over $[5, 10)$), $s_1 = 2$ and $s_2 = 3$.

\paragraph{Centered (Gaussian or Gamma) distributions (\Cref{fig:centered,fig:time}).} We first generate $\{x^{(j)}\}_{j=1}^n,\, \{y^{(j)}\}_{j=1}^n$ using the Gaussian (or Gamma) distributions described in the two paragraphs above. Then, we center the data: for $j \in \{1, \dots, n\}, \bar{x}^{(j)} = x^{(j)} - n^{-1} \sum_{i=1}^n x^{(i)}$ and $\bar{y}^{(j)} = y^{(j)} - n^{-1} \sum_{i=1}^n y^{(i)}$. The two distributions that we compare with SW, referred to as $\bmu_d, \bnu_d$ in \Cref{sec:exp}, correspond to the empirical distributions of the centered datasets $\{\bar{x}^{(j)}\}_{j=1}^n,\, \{\bar{y}^{(j)}\}_{j=1}^n$, which can be denoted by $\bmu_{d,n}$ and $\bnu_{d,n}$.

We prove in the next proposition that our theoretical bounds derived in \Cref{sec:error_id} can be improved for centered Gaussian distributions: in this setting, the expected approximation error is upper-bounded by a term in $d^{-1/2}$, which is consistent with the slope observed in \Cref{fig:centered}.

\begin{proposition} \label{prop:rate_gauss}
  For $d\in\nsets$, let $\mu_d = \lawGauss({\bfm}_1, \sigma_1^2\bfI_d)$ and $\nu_d = \lawGauss({\bfm}_2, \sigma_2^2\bfI_d)$, and denote by $\bmu_d,~\bnu_d$ their centered versions, \ie~$\bmu_d = \lawGauss({\bf0}, \sigma_1^2\bfI_d)$ and $\bnu_d = \lawGauss({\bf0}, \sigma_2^2\bfI_d)$. Consider the empirical distributions $\bmu_{d,n},~\bnu_{d,n}$ given by 
  \begin{equation}
    \bmu_{d, n} =  (1/n)~\sum_{j=1}^n \updelta_{(X_{1:d}^{(j)} - \bar{X}_{1:d})}\eqsp, \quad \bnu_{d, n} =  (1/n)~\sum_{j=1}^n \updelta_{(Y_{1:d}^{(j)} - \bar{Y}_{1:d})}\eqsp,
  \end{equation}
  where $\{X_{1:d}^{(j)}\}_{j=1}^n$ (respectively, $\{Y_{1:d}^{(j)}\}_{j=1}^n$) is a sequence of $n$ random variables \iid~from $\mu_d$ (respectively, from $\nu_d$), $\bar{X}_{1:d} = n^{-1} \sum_{j=1}^n X_{1:d}^{(j)}$, and $\bar{Y}_{1:d} = n^{-1} \sum_{j=1}^n Y_{1:d}^{(j)}$. Then,
  \begin{equation*}
    \bbE \big| \swassersteinD[2](\bmu_d, \bnu_d) - \wassersteinD[2]\{ \lawGauss(0, d^{-1}\moment_2(\bmu_{d,n})), \lawGauss(0, d^{-1}\moment_2(\bnu_{d,n})) \} \big| \leq \frac{\sigma_1 + \sigma_2}{(2dn)^{1/2}} + \calO\left(\frac1{dn}\right) \eqsp,
  \end{equation*}
  where $\bbE$ is the expectation with respect to $\{X_{1:d}^{(j)}\}_{j=1}^n$ and $\{Y_{1:d}^{(j)}\}_{j=1}^n$, and $\moment_2(\bmu_{d,n}), \moment_2(\bnu_{d,n})$ are defined in \eqref{eq:def_constmud_d}, \ie~$\moment_2(\bmu_{d,n}) = n^{-1} \sum_{j=1}^n \| X_{1:d}^{(j)} - \bar{X}_{1:d} \|^2$ and $\moment_2(\bnu_{d,n}) = n^{-1} \sum_{j=1}^n \| Y_{1:d}^{(j)} - \bar{Y}_{1:d} \|^2$. 
\end{proposition}

\begin{proof}[Proof of \Cref{prop:rate_gauss}]
  Given the closed-form expressions in \eqref{eq:exact_sw_gauss} and \eqref{eq:wass_gauss}, we have
  \begin{align}
    &\bbE \big| \swassersteinD[2](\bmu_d, \bnu_d) - \wassersteinD[2]\{ \lawGauss(0, d^{-1}\moment_2(\bmu_{d,n})), \lawGauss(0, d^{-1}\moment_2(\bnu_{d,n})) \} \big| \nonumber \\
    &= \bbE \big| | \sigma_1 - \sigma_2 | - | d^{-1/2}\moment_2(\bmu_{d,n})^{1/2} - d^{-1/2}\moment_2(\bnu_{d,n})^{1/2} | \big| \nonumber \\
    &\leq \bbE \big| \sigma_1 - \sigma_2 - d^{-1/2}\moment_2(\bmu_{d,n})^{1/2} + d^{-1/2}\moment_2(\bnu_{d,n})^{1/2} \big| \label{eq:reverse_triangeq} \\
    &\leq \bbE \big| \sigma_1 - d^{-1/2}\moment_2(\bmu_{d,n})^{1/2} \big| + \bbE \big| \sigma_2 - d^{-1/2}\moment_2(\bnu_{d,n})^{1/2} \big| \eqsp. \label{eq:triangleq_lin}
  \end{align}
  where \eqref{eq:reverse_triangeq} results from applying the reverse triangle inequality, and \eqref{eq:triangleq_lin} follows from the triangle inequality and the linearity of the expectation.

  The final result follows from bounding $\bbE \big| \sigma_1 - d^{-1/2}\moment_2(\bmu_{d,n})^{1/2} \big|$ and $\bbE \big| \sigma_2 - d^{-1/2}\moment_2(\bnu_{d,n})^{1/2} \big|$ from above. First, by the Cauchy--Schwarz inequality,
  \begin{align}
    \bbE \big| \sigma_1 - d^{-1/2}\moment_2(\bmu_{d,n})^{1/2} \big| \leq \Big\{ \bbE \big[ (\sigma_1 - d^{-1/2}\moment_2(\bmu_{d,n})^{1/2})^2 \big]\Big\}^{1/2} \eqsp, \label{eq:rate_gauss_2}
  \end{align}
  with 
  \begin{align}
    \bbE \big[ (\sigma_1 - d^{-1/2}\moment_2(\bmu_{d,n})^{1/2})^2 \big] &= \sigma_1^2 - 2\sigma_1 d^{-1/2} \bbE [ \moment_2(\bmu_{d,n})^{1/2} ] + \bbE [ d^{-1}\moment_2(\bmu_{d,n}) ] \eqsp. \label{eq:rate_gauss_3}
  \end{align}

  Consider the random variable defined as $Z = \sqrt{\sum_{i=1}^{dn} \left\{ (X_i - \bar{X})^2 / \sigma_1^2\right\}}$, where $\{X_i\}_{i=1}^{dn}$ are \iid~from $\lawGauss(0, \sigma_1^2)$ and $\bar{X} = (dn)^{-1} \sum_{i=1}^{dn} X_i$. Then, by Cochran's theorem, $Z$ is distributed from the chi distribution with $dn - 1$ degrees of freedom. This implies that, 
  \begin{align*}
    &\bbE[d^{-1}\moment_2(\bmu_{d,n})] = \sigma_1^2~\frac{dn - 1}{dn} \eqsp, \\
    &\bbE[Z] = \sqrt{2}~\frac{\Gamma(dn/2)}{\Gamma((dn-1)/2)} = \sqrt{dn - 1} \left[ 1 - \frac{1}{4dn} + \calO\left(\frac1{(dn)^2}\right) \right] \eqsp.
  \end{align*}

  Hence, \eqref{eq:rate_gauss_3} boils down to
  \begin{align} \label{eq:rate_gauss_4}
    \bbE \big[ (\sigma_1 - d^{-1/2}\moment_2(\hmu_{d,n})^{1/2})^2 \big] &= \sigma_1^2 \left[ 2 - \frac1{dn} - 2 \left( 1 - \frac1{dn} \right)^{1/2} \left\{ 1 - \frac1{4dn} + \calO\left( \frac1{(dn)^2} \right) \right\} \right] \eqsp.
  \end{align}

  Besides, we know that
  \begin{equation}
    \left( 1 - \frac1{dn} \right)^{1/2} = 1 - \frac1{2dn} + \calO\left( \frac1{(dn)^2} \right) \eqsp,
  \end{equation}
  so we can write \eqref{eq:rate_gauss_4} as
  \begin{equation}
    \bbE \big[ (\sigma_1 - d^{-1/2}\moment_2(\hmu_{d,n})^{1/2})^2 \big] = \frac{\sigma_1^2}{2dn} + \calO\left( \frac1{(dn)^2} \right) \eqsp. \label{eq:rate_gauss_5}
  \end{equation}

  By plugging \eqref{eq:rate_gauss_5} in \eqref{eq:rate_gauss_2}, we conclude that
  \begin{align}
    \bbE \big| \sigma_1 - d^{-1/2}\moment_2(\hmu_{d,n})^{1/2} \big| \leq \frac{\sigma_1}{(2dn)^{1/2}} + \calO\left(\frac1{dn}\right) \eqsp. \label{eq:rate_gauss_6}
  \end{align}

  We can use the same reasoning to prove that
  \begin{align}
    \bbE \big| \sigma_2 - d^{-1/2}\moment_2(\hnu_{d,n})^{1/2} \big| \leq \frac{\sigma_2}{(2dn)^{1/2}} + \calO\left(\frac1{dn}\right) \eqsp, \label{eq:rate_gauss_7}
  \end{align}
  and we use \eqref{eq:rate_gauss_6} and \eqref{eq:rate_gauss_7} to bound \eqref{eq:triangleq_lin}, which concludes the proof.

\end{proof}

\subsection{Autoregressive processes}

Let $(X_j)_{j\in\nsets}$ be an autoregressive process of order 1 defined as $X_1 = \veps_1$ and for $t \in \nsets$, $t > 1$, $X_t = \alpha X_{t-1} + \veps_t$, where $\alpha \in [0, 1)$ and $(\veps_j)_{j\in\nsets}$ is a sequence of \iid~real random variables such that $\PE[\veps_1] = 0$ and $\PE[\veps_1^2] < \infty$.

For $d\in\nsets$ and $B = 10^4$, we generate $n$ realizations of $\{X_j\}_{j=B+1}^{B+d} \in \Rd$ using the aforementionned recursion. This gives us our first dataset $\{x^{(j)}\}_{j=1}^n \in \rset^{dn}$. Note that the first $B$ steps of the process are discarded in order to reach its stationary regime (which exists since $\abs{\alpha} < 1$), and thus meet the weak dependence condition \cite{doukhan2008}. We repeat the same procedure to obtain the second dataset, $\{y^{(j)}\}_{j=1}^n$. Since the two datasets are generated using the same AR(1) model, $\mu_d$ and $\nu_d$ are the same distribution, so the exact value of SW is zero. 

We conducted our experiments on two types of AR(1) processes, which differ from the distribution used to draw $n$ \iid~samples of $\{\veps_j\}_{j=1}^{B+d}$. The two settings are specified below.

\paragraph{Gaussian noise (\Cref{fig:weakdep_gaussian}).} For $j \in \{1, \dots, B+d\}$, $\veps_j \sim \lawGauss(0, 1)$.

\paragraph{Student's $t$ noise (\Cref{fig:weakdep_student}).} Denote by $t(r)$ the Student's $t$ distribution with $r > 0$ degrees of freedom. For $j \in \{1, \dots, B+d\}$, $\veps_j \sim t(10)$. 

\subsection{Computing infrastructure}

The experiment comparing the computation time of our methodology against Monte Carlo estimation (\Cref{fig:time}) was conducted on a daily-use laptop equipped with 8 $\times$ Intel Core i7-8650U CPU @ 1.90GHz, 16GB of RAM.

\section{Experimental details for image generation} \label{sec:supp_genexp}

\paragraph{Architecture.} For each model (SWG, reg-SWG or reg-det-SWG), we used the architectures described in \cite{deshpande2018generative}: the ``Conv \& Deconv'' generator and discriminator in \cite[Section D]{deshpande2018generative} for MNIST, and DCGAN \cite{radford2016} with layernorm for both the generator and discriminator for CelebA.

\paragraph{Data preprocessing.} For MNIST, we do not apply any specific preprocessing. For CelebA, each image is cropped at the center and resized to $140 \times 140$ (using the notation width $\times$ height, both in pixels), then resized to $64 \times 64$.

\paragraph{Optimization.} For each model, we used the same optimization routine as in \cite{deshpande2018generative}: one training iteration consists in performing one update for the generator then one update for the discriminator, both with the default setting of Adam \cite{kingma2014method} (\ie~$\beta_1 = 0.9$, $\beta_2 = 0.999$, $\veps = 10^{-8}$). The values of other important hyperparameters are given in \Cref{tab:hyperparameters}.

\begin{table}[h!]
    \centering
    \begin{tabular}{|l|l|l|l|}
    \hline
    \textbf{Dataset} & \textbf{Batch size} & \textbf{Learning rate} & \textbf{Total number of epochs} \\
    \hline
    MNIST & 512 & $5 \times 10^{-4}$ & 200 \\
    \hline
    CelebA & 64 & $1 \times 10^{-4}$ & 20 \\
    \hline
    \end{tabular}
    \vspace{2mm}
    \caption{Hyperparameters used when training each model.}
    \label{tab:hyperparameters}
    \vspace{-5mm}
\end{table}

\paragraph{Regularization parameters.} For reg-SWG and reg-det-SWG, we tuned the regularization coefficients $(\lambda_1, \lambda_2)$ via cross-validation: we trained the models for $\lambda_1 \in \{ 10^{-3}, 10^{-2}, 10^{-1}, 1 \}$ and $\lambda_2 \in \{0, 10^{-3}, 10^{-2}, 10^{-1}, 1\}$, and selected the tuple that minimizes the average FID over 5 runs. 

\paragraph{Computing infrastructure.} The FID and computation times on GPU reported in \Cref{tab:gen_results} (columns `FID', `$T_{\text{SW}}$, GPU' and `$T_{\text{tot}}$, GPU') were obtained by training each model on a computer cluster equipped with 3 GPUs (NVIDIA Tesla V100-PCIE-32GB and 2 $\times$ NVIDIA Tesla V100-PCIE-16GB) for CelebA, and with 1 GPU (NVIDIA  GP100GL, Tesla P100 PCIe 16GB) for MNIST. To obtain the computation times on CPU (\Cref{tab:gen_results}, columns `$T_{\text{SW}}$, CPU' and `$T_{\text{tot}}$, CPU'), we used a workstation equipped with 24 $\times$ Intel Xeon CPU E5-2620 v3 @ 2.40GHz.

\end{document}